\DeclareMathAlphabet\EuRoman{U}{eur}{m}{n}
\SetMathAlphabet\EuRoman{bold}{U}{eur}{b}{n}
\crefname{lemma}{Lemma}{Lemmas}
\crefname{corollary}{Corollary}{Corollaries}
\crefname{theorem}{Theorem}{Theorems}
\let\reftagform@=\tagform@
\def\tagform@#1{\maketag@@@{\ignorespaces\textcolor{gray}{(#1)}\unskip\@@italiccorr}}
\renewcommand{\eqref}[1]{\textup{\reftagform@{\ref{#1}}}}
\declaretheorem[style=plain,numberwithin=section,name=Theorem]{theorem}
\declaretheorem[style=plain,sibling=theorem,name=Lemma]{lemma}
\declaretheorem[style=plain,sibling=theorem,name=Proposition]{proposition}
\declaretheorem[style=definition,sibling=theorem,name=Definition]{definition}
\numberwithin{theorem}{section}
\def\[#1\]{\begin{align}#1\end{align}}
\def\*[#1\]{\begin{align*}#1\end{align*}}
\newcommand{\defas}{\vcentcolon=}  
\DeclareMathOperator{\tr}{\mathrm{trace}}
\newcommand{\innerprod}[2]{\langle #1, #2 \rangle}
\newcommand{\rkhs}{\mathcal{H}}
\newcommand{\Reals}{\mathbb{R}}
\newcommand{\Complex}{\mathbb{C}}
\newcommand{\Nats}{\mathbb{N}}
\newcommand{\Ints}{\mathbb{Z}}
\newcommand{\Sphere}{\mathbb{S}}
\newcommand{\Torus}{\mathbb{T}}
\DeclareMathOperator*{\newlim}{\mathrm{lim}\vphantom{\mathrm{infsup}}}
\DeclareMathOperator*{\newmin}{\mathrm{min}\vphantom{\mathrm{infsup}}}
\renewcommand{\lim}{\newlim}
\renewcommand{\min}{\newmin}
\def\bone{\mathbf{1}}
\newcommand{\KL}[2]{\mathrm{KL}\left(#1 || #2\right)}
\newcommand{\bbf}{\mathbf{f}}
\newcommand{\bx}{\mathbf{x}}
\newcommand{\bk}{\mathbf{k}}
\newcommand{\by}{\mathbf{y}}
\newcommand{\bz}{\mathbf{z}}
\newcommand{\bv}{\mathbf{v}}
\newcommand{\bV}{\mathbf{V}}
\newcommand{\bw}{\mathbf{w}}
\newcommand{\bc}{\mathbf{c}}
\newcommand{\bt}{\mathbf{t}}
\newcommand{\bu}{\mathbf{u}}
\newcommand{\be}{\mathbf{e}}
\newcommand{\bs}{\mathbf{s}}
\newcommand{\bX}{\mathbf{X}}
\newcommand{\bS}{\mathbf{S}}
\newcommand{\bI}{\mathbf{I}}
\newcommand{\bK}{\mathbf{K}}
\newcommand{\bM}{\mathbf{M}}
\newcommand{\bR}{\mathbf{R}}
\newcommand{\bC}{\mathbf{C}}
\newcommand{\bU}{\mathbf{U}}
\newcommand{\bZ}{\mathbf{Z}}
\newcommand{\bF}{\mathbf{F}}
\newcommand{\bzero}{\mathbf{0}}
\newcommand{\bmu}{\bm{\mu}}
\newcommand{\bSigma}{\bm{\Sigma}}
\newcommand{\bomega}{\bm{\omega}}
\newcommand{\bLambda}{\bm{\Lambda}}
\newcommand{\elbo}{\mathcal{L}}
\newcommand{\expect}{\mathbb{E}}
\newcommand{\normal}{\mathcal{N}}
\newcommand{\xdomain}{\mathcal{X}}
\newcommand{\loss}{\mathcal{L}}
\newcommand{\bigO}{\mathcal{O}}
\newcommand{\GP}{\mathcal{GP}}
\newcommand{\diag}{\mathrm{diag}}
\newcommand{\vectorize}[1]{\mathrm{vec}(#1)}
\icmltitlerunning{Scalable Variational Gaussian Processes via Harmonic Kernel Decomposition}
\begin{document}

\twocolumn[
\icmltitle{Scalable Variational Gaussian Processes via Harmonic Kernel Decomposition}




\begin{icmlauthorlist}
\icmlauthor{Shengyang Sun}{to,vec}
\icmlauthor{Jiaxin Shi}{mr}
\icmlauthor{Andrew Gordon Wilson}{nyc}
\icmlauthor{Roger Grosse}{to,vec}
\end{icmlauthorlist}

\icmlaffiliation{to}{University of Toronto}
\icmlaffiliation{vec}{Vector Institute}
\icmlaffiliation{mr}{Microsoft Research New England}
\icmlaffiliation{nyc}{New York University}

\icmlcorrespondingauthor{Shengyang Sun}{ssy@cs.toronto.edu}

\icmlkeywords{Machine Learning, ICML}

\vskip 0.3in
]



\printAffiliationsAndNotice{} 

\begin{abstract}

We introduce a new scalable variational Gaussian process approximation which provides a high fidelity approximation while retaining general applicability. We propose the harmonic kernel decomposition (HKD), which uses Fourier series to decompose a kernel as a sum of orthogonal kernels. Our variational approximation exploits this orthogonality to enable a large number of inducing points at a low computational cost. We demonstrate that, on a range of regression and classification problems, our approach can exploit input space symmetries such as translations and reflections, and it significantly outperforms standard variational methods in scalability and accuracy. Notably, our approach achieves state-of-the-art results on CIFAR-10 among pure GP models.

\end{abstract}

\section{Introduction}

Gaussian Processes (GPs) \citep{rasmussen2006gaussian} are flexible Bayesian nonparametric models which enable principled reasoning about distributions of functions and provide rigorous uncertainty estimates \citep{srinivas2010gaussian, deisenroth2011pilco}. Unfortunately, exact inference in GPs is impractical for large datasets because of the $\bigO(N^3)$ computational cost (for a dataset of size $N$). To overcome the computational roadblocks, sparse Gaussian processes \citep{snelson2006sparse, quinonero2005unifying} use $M$ inducing points to approximate the kernel function, reducing the computational cost to $\bigO(NM^2 + M^3)$. However, 
these approaches are prone to overfitting since all inducing points are hyperparameters. 
Sparse variational Gaussian Processes (SVGPs) \citep{titsias2009variational, hensman2015scalable} 
offer an effective protection against overfitting by
framing a posterior approximation using the inducing points and 
optimizing them 
with variational inference. 
Still, the $\bigO(M^3)$ complexity prevents SVGPs from scaling beyond a few thousand inducing points, creating difficulties in improving the quality of approximation.

Several approaches impose structure on the inducing points to increase the approximation capacity. Structured kernel interpolation (SKI) \citep{wilson2015kernel, wilson2015thoughts} approximates the kernel by placing inducing points over a Euclidean grid and exploiting fast structured matrix operations. SKI can use millions of inducing points, but is limited to low-dimensional problems because the grid size grows exponentially with the input dimension.  Other approaches define approximate posteriors using multiple sets of inducing points. \citet{cheng2017variational, salimbeni2018orthogonally} propose to 
decouple the inducing points for modelling means and covariances, 
leading to a linear complexity with respect to the number of mean inducing points. SOLVE-GP~\citep{shi2020sparse} reformulates a GP as the sum of two orthogonal processes and uses distinct groups of inducing points for each; this has the benefit of 
improving the approximation at a lower cost than standard SVGPs.
 
%

\begin{figure}[t]
\centering
\includegraphics[width=\columnwidth]{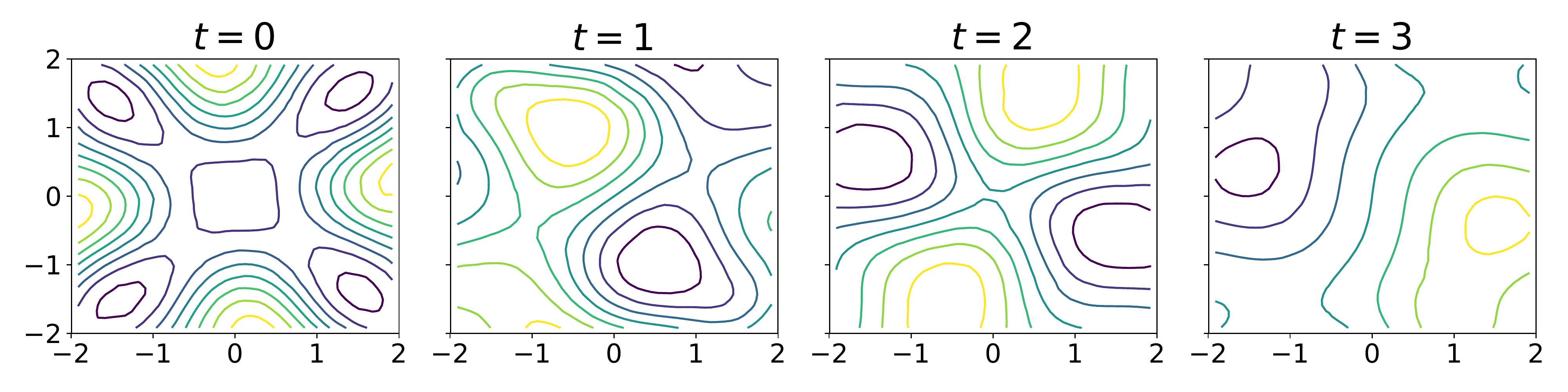}
\caption{Visualizing the harmonic kernel decomposition. 
We decompose a $2$-dimensional RBF kernel as $k = \sum_{t=0}^3 k_t$ using the symmetry group of $90^\text{o}$ rotations. 
We plot the real part of random functions sampled from each $\GP(0, k_t)$. Notice that $\GP(0, k_0)$ is invariant to $90^\text{o}$ rotations; $\GP(0, k_1)$ takes opposite values under $180^\text{o}$ rotations; $\GP(0, k_2)$ is invariant to $180^\text{o}$ rotations but has opposite values under $90^\text{o}$ rotations. 
\label{fig:rbf-rotate-gp}}
\end{figure} 

%

In this paper, we introduce a more scalable variational approximation for GPs via the proposed \emph{harmonic kernel decomposition (HKD)}, which decomposes the kernel as a sum of orthogonal kernels, $k(\bx, \bx') = \sum_{t=0}^{T-1} k_t(\bx, \bx')$, 
using Fourier series.
The HKD reformulates the Gaussian process into an additive GP, 
where each 
subprocess 
models a Fourier component of the function (see Figure~\ref{fig:rbf-rotate-gp} for a visualization). 
We then propose the Harmonic Variational Gaussian Process~(HVGP), which uses separate sets of inducing points for the subprocesses.
Compared to a standard variational approximation, HVGPs allow us to use a large number of inducing points at a much lower computational cost.
Moreover, HVGPs have an advantage over SKI in that they allow trainable inducing points. 
Finally, unlike the SOLVE-GP whose decomposition involves only two subprocesses, our HVGP is easily applicable to multiple subprocesses and can be computed efficiently with the discrete Fourier transform.


Empirically, we demonstrate the scalability and general applicability of HVGPs through a range of problems and models including using RBF kernels for modelling earth elevations and using convolutional kernels for image classification. In these experiments, HVGPs significantly outperform standard variational methods by exploiting the input-space symmetries such as translations and reflections. Our model can further exploit parallelism to achieve minimal wall-clock overhead when using 8 groups of inducing points.
In CIFAR-10 classification, 
we show that our method 
can be integrated with deep convolutional structures to achieve state-of-the-art results for GPs.

\section{Background}
\label{sec:bg}


\subsection{Discrete Fourier Transform}
\label{sec:bg-dft}

Fourier analysis \citep{baron1878analytical} studies the representation of functions as sums or integrals of sinusoids.
For an integrable function $f$ on $\Reals^d$, its Fourier transform is defined as
\begin{align}
    \hat{f}(\bomega) = \int_{\mathbb{R}^d} e^{-2\pi i \bomega^\top\bx} f(\bx)\, \mathrm{d}\bx,
\end{align}
where $\hat{f}(\bomega) \in \Complex$.
In kernel theory, Bochner's Theorem \citep{bochner1959lectures} is a seminal result that uses the Fourier transform to establish a bijection between stationary kernels and positive measures in the spectral domain. 

Fourier analysis can be performed over finite sequences as well, via the discrete Fourier transform (DFT) \citep{cooley1969finite}. Specifically, given a sequence $\bx = [\bx_0, ..., \bx_{T-1}]^{\top}$\!, DFT computes the sequence 
$\hat{\bx} = [\hat{\bx}_0, ..., \hat{\bx}_{T-1}]^{\top}$\!,  with
\begin{align}
   \hat{\bx}_{t} = \frac{1}{T}\sum_{s=0}^{T-1} \bx_s e^{-i\frac{2\pi t s}{T}}, \; t=0,...,T-1.
\end{align}
Let $\bF:=[\frac{1}{T} e^{-i\frac{2\pi t s}{T} } ]_{t,s=0}^{T-1} \in \Complex^{T \times T}$ denote the DFT matrix.
The DFT can be represented in vector form as 
$\hat{\bx} = \bF \bx$, which naturally leads to the inverse DFT: $\bx=\bF^{-1}\hat{\bx}$.

More generally, if $\bX \in \Complex^{T_1 \times \cdots \times T_k}$ is a tensor, the multidimensional DFT computes the tensor $\hat{\bX} \in \Complex^{T_1 \times \cdots \times T_k}$,
\begin{align}
    \hat{\bX}[t_1, ..., t_k] = \frac{1}{T} \sum_{s_1, ..., s_k} X[s_1, ..., s_k] \prod_{j=1}^k e^{-i\frac{2\pi  t_j s_j}{T_j}}, \notag 
\end{align}
where $T = \prod_{j}T_j$. The DFT matrix is then a tensor product of one-dimensional DFT matrices.

\subsection{Gaussian Processes}
Given an input domain $\xdomain$, a mean function $m$, and a kernel function $k: \xdomain \times \xdomain \to \Reals$, the Gaussian process \citep{rasmussen2006gaussian} $\GP(m, k)$ is a distribution 
over functions $\xdomain \to \Reals$. 
For any finite set $\{\bx_1,\bx_2, \dots, \bx_N\} \subset \xdomain$, the function values $\bbf = [f(\bx_1), f(\bx_2), ..., f(\bx_N)]^\top$ have a multivariate Gaussian distribution: 
\begin{equation}
\bbf \sim \mathcal{N}(m(\bX), \mathbf{K}_{\bbf \bbf}), \notag 
\end{equation}
where $m(\bX) =[m(\bx_1), ..., m(\bx_N)]^\top$\!, and $\mathbf{K}_{\bbf \bbf} = [k(\bx_i,\bx_j)]_{i,j=1}^N$. 
For simplicity we assume $m(\cdot) = 0$ throughout the paper. 
The observations $y$ are modeled with a density $p(y|f(\bx))$, often taken to be Gaussian in the regression setting: $y = f(\bx) + \epsilon, \quad \epsilon \sim \mathcal{N}(0, \sigma^2)$.
Let  $(\bX, \by)$ be a training set of size $N$. The posterior distribution $p(\bbf^{\star}|\by)$ under a Gaussian observation model is 
\begin{align}
\normal( \bK_{\star \bbf}(\bK_{\bbf \bbf} + \sigma^2\bI)^{-1}\by, \bK_{\star \star} - \bK_{\star \bbf}(\bK_{\bbf \bbf} + \sigma^2\bI)^{-1}  \bK_{\bbf \star}), \notag
\end{align}
where $\bbf^{\star}=f(\bX^{\star})$ are function values at test locations. Unfortunately, computing the posterior mean and covariance requires inverting the kernel matrix, an $\bigO(N^3)$ computation.

Sparse variational Gaussian processes (SVGPs) \citep{titsias2009variational, hensman2013gaussian} use inducing points for scalable GP inference. Let $\bZ = [\bz_1, ..., \bz_M]^{\top} \in \Reals^{M \times d}$ be $M$ inducing locations, and $\bu = f(\bZ)$. 
SVGPs consider an augmented joint likelihood, $p(f(\cdot), \bu) = p(f(\cdot)| \bu) p(\bu)$, and a variational approximation $q(f(\cdot), \bu) = p(f(\cdot) | \bu) q(\bu)$, where $ q(\bu)=\normal(\bmu, \bS)$ is a parameterized multivariate Gaussian with mean $\bmu$ and covariance $\bS$. 
The variational approximation is optimized by maximizing the variational lower bound:
\begin{align}
    \elbo \defas \expect_{q(\bbf, \bu)} [\log p(\by|\bbf, \bX)] - \KL{q(\bu)}{p(\bu)}.
\end{align}
Since $\log p(\by|\bbf, \bX) = \sum_{i=1}^N \log p(\by_i|f(\bx_i))$ 
admits stochastic optimization,
SVGPs reduce the computational cost to $\bigO(M^3 + M^2B)$, where $B$ is the minibatch size.

\section{Harmonic Kernel Decomposition}\label{sec:kern-decomp}
In this section, we introduce kernel Fourier series and use them 
to form the harmonic kernel decomposition. 
All proofs can be found in \Cref{app:proof:ortho}. 

\subsection{Kernel Fourier Series}

We first propose a general method for representing a kernel as a sum of functions. 
The idea is based on the DFT (see Sec.~\ref{sec:bg-dft}). Let $k: \mathcal{X}\times\mathcal{X} \to \Complex$ be a positive definite kernel.
To apply the DFT, we fix the first input $\bx$, and construct a finite sequence of kernel values using a transformation $G: \mathcal{X}\to\mathcal{X}$ that applies to the second input:
\begin{equation} \label{eq:dft-seq}
	[k(\bx, G^0(\bx')), k(\bx, G^1(\bx')), ..., k(\bx, G^{T-1}(\bx'))],
\end{equation}
where $G^0(\bx) := \bx$ and $G^t := G \circ G^{t - 1}$.
Note that in signal processing, the sequence under DFT usually contains equally-spaced samples along the time domain. Here, we adopt a more general form by using $G$ to exploit symmetries in the input domain.


\begin{definition}[Kernel Fourier Series]
We define $T$ complex-valued functions $k_t: \xdomain \times \xdomain \to \Complex$, $t=0, ~\dots,~T-1$ using the DFT of the sequence \eqref{eq:dft-seq}: 
\begin{align} \label{eq:def-kt}
	k_t(\bx, \bx') =  \sum_{s=0}^{T-1} \bF_{t, s} k(\bx, G^s(\bx')),
\end{align}
where $\bF$ is the DFT matrix (see Sec.~\ref{sec:bg-dft}). 
The \emph{kernel Fourier series} of $k(\bx, G^s(\bx'))$ is given by the inverse DFT:
\begin{align} \label{eq:kernel-fourier-series}
	k(\bx, G^s(\bx')) = \sum_{t=0}^{T-1} \bF_{s,t}^{-1} k_t(\bx, \bx'). 
\end{align}
The inverse DFT matrix is $\bF^{-1} = T\bF^H$, where $\cdot^H$ is the conjugate transpose.
\end{definition}

Given the positive definiteness of $k$, it is tempting to ask whether $k_t$ is also a (complex-valued) kernel. 
In the next section, we will study the conditions when this holds and use it to form an orthogonal kernel decomposition.

\subsection{Harmonic Kernel Decomposition}\label{subsec:kernel-decom}
We first introduce the following definitions.

\begin{definition}[$T$-Cyclic Transformation]
A function $G: \xdomain \to \xdomain$ is $T$-cyclic if $T$ is the smallest integer such that,
\begin{align}
	\forall \bx \in \xdomain,\;G^T (\bx) \defas \overbrace{G \circ \cdots \circ G}^{T} (\bx) = \bx.
\end{align}
\end{definition}
In group theory, $\{G^0, G^1, \dots, G^{T - 1}\}$ forms a cyclic group of order $T$, and $G$ is the generator of this group. 
Interestingly, given a $T$-cyclic $G$, multiplying $k_t$ with $e^{i\frac{2\pi t}{T}}$ corresponds to a shift by $G$ in the second input. 
\begin{proposition}[Shift]\label{prop:invariance} For any $0\leq t \leq T-1$,
\begin{align}
    k_t(\bx, G(\bx')) = e^{i\frac{2\pi t}{T}} k_t(\bx, \bx'). 
\end{align}
\end{proposition}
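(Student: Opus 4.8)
The plan is to unfold the definition~\eqref{eq:def-kt} of $k_t$, substitute $G(\bx')$ into the second argument, and then re-index the resulting sum cyclically, using that $G$ is $T$-cyclic so that $G^T$ is the identity map on $\xdomain$.

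Concretely, I would start from
\begin{align*}
k_t(\bx, G(\bx')) = \sum_{s=0}^{T-1} \bF_{t,s}\, k(\bx, G^s(G(\bx'))) = \frac{1}{T}\sum_{s=0}^{T-1} e^{-i\frac{2\pi t s}{T}}\, k(\bx, G^{s+1}(\bx')),
\end{align*}
where I used $\bF_{t,s} = \tfrac{1}{T} e^{-i 2\pi t s/T}$ and $G^s \circ G = G^{s+1}$. Next I would set $r = s+1$, so that the sum runs over $r = 1, \dots, T$ and $e^{-i 2\pi t s/T} = e^{i 2\pi t/T}\, e^{-i 2\pi t r/T}$; pulling out the factor $e^{i 2\pi t/T}$ leaves $\tfrac{1}{T}\sum_{r=1}^{T} e^{-i 2\pi t r/T}\, k(\bx, G^r(\bx'))$.

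The crucial step is to observe that the summand $e^{-i 2\pi t r/T}\, k(\bx, G^r(\bx'))$ is periodic in $r$ with period $T$: the exponential is $T$-periodic, and $G^r(\bx')$ is $T$-periodic precisely because $G$ is $T$-cyclic, i.e. $G^T(\bx') = \bx' = G^0(\bx')$. Therefore $\sum_{r=1}^{T} = \sum_{r=0}^{T-1}$, and the latter sum is exactly $k_t(\bx, \bx')$ by~\eqref{eq:def-kt}. This yields $k_t(\bx, G(\bx')) = e^{i 2\pi t/T} k_t(\bx, \bx')$, as claimed.

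There is essentially no obstacle here beyond bookkeeping: the one point requiring care is the boundary term in the re-indexing — the $r = T$ contribution must be identified with the $r = 0$ contribution — and this is exactly where the $T$-cyclic hypothesis on $G$ enters. Note that positive-definiteness of $k$ plays no role in this particular statement; it is a pure consequence of the DFT shift property combined with $G^T = \mathrm{id}$.
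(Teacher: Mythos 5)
Your proof is correct and follows essentially the same route as the paper's: unfold the definition of $k_t$, shift the summation index using $G^T = \mathrm{id}$, and factor out the phase $e^{i 2\pi t/T}$. Your version is slightly more careful about the boundary term $r=T$ being identified with $r=0$, which the paper glosses over by writing $\bF_{t,s-1}$ with the index implicitly taken modulo $T$.
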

From \Cref{prop:invariance} we have $k_0(\bx, G(\bx')) = k_0(\bx, \bx')$, and when $T$ is even, $k_{T/2}(\bx, G^2(\bx')) = k_{T/2}(\bx, \bx')$. 
This property is illustrated in \Cref{fig:rbf-rotate-gp}. 

\begin{definition}[$G$-Invariant kernels]
A kernel function $k: \xdomain \times \xdomain \to \Complex$ is $G$-invariant if,
\begin{align}
	\forall \bx, \bx' \in \xdomain,\; k(G(\bx), G(\bx')) = k(\bx, \bx').
\end{align}
\end{definition}

For example, a polynomial kernel $k(\bx, \bx')=(\bx^{\top}\bx'+c)^t$ is invariant to the rotation transformations.

\begin{theorem}[Harmonic Kernel Decomposition] \label{thm:kern-decomp}
Let $G$ be a $T$-cyclic transformation, and $k$ be a $G$-invariant kernel. Then, the following decomposition holds:
\begin{align}
    k(\bx, \bx') = \sum_{t=0}^{T-1} k_t(\bx, \bx'),
\end{align}
where $k_t,\;t=0, \dots,T-1$ are defined as in Eq.~\eqref{eq:def-kt}. Moreover, for any $0\leq t \leq T-1$, $k_t$ is a Hermitian kernel.
\end{theorem}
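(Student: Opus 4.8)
The plan is to prove the two assertions of \Cref{thm:kern-decomp} separately. The decomposition $k = \sum_{t=0}^{T-1} k_t$ is the easy part: by \eqref{eq:kernel-fourier-series} with $s=0$, we have $k(\bx,\bx') = k(\bx, G^0(\bx')) = \sum_{t=0}^{T-1} \bF^{-1}_{0,t} k_t(\bx,\bx')$, and since $\bF^{-1} = T\bF^H$ with $\bF_{t,0} = \tfrac1T$ for all $t$, every entry $\bF^{-1}_{0,t}$ equals $1$. So the first claim is immediate from the definitions and does not even use $G$-invariance or $T$-cyclicity. (I would double-check the indexing convention so that the row of $\bF^{-1}$ corresponding to $s=0$ is indeed all ones.)

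The substantive claim is that each $k_t$ is a Hermitian kernel, meaning $k_t(\bx,\bx') = \overline{k_t(\bx',\bx)}$ and $k_t$ is positive semidefinite. For the Hermitian symmetry, I would write $k_t(\bx,\bx') = \sum_{s} \bF_{t,s}\, k(\bx, G^s(\bx'))$ and use that $k$ is a genuine (complex) positive definite kernel, hence $k(\bx,\bx'') = \overline{k(\bx'',\bx)}$. Then
\*[
\overline{k_t(\bx',\bx)} = \sum_{s=0}^{T-1} \overline{\bF_{t,s}}\,\overline{k(\bx', G^s(\bx))} = \sum_{s=0}^{T-1} \tfrac1T e^{i\frac{2\pi t s}{T}} k(G^s(\bx), \bx').
\]
Now I invoke $G$-invariance: applying $G^{T-s}$ to both arguments (legitimate because $G$ is $T$-cyclic, so $G^{T-s}$ is a genuine transformation and $G^{T-s}\circ G^s = \mathrm{id}$), $k(G^s(\bx),\bx') = k(G^T(\bx), G^{T-s}(\bx')) = k(\bx, G^{T-s}(\bx'))$. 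Reindexing $s \mapsto T-s$ in the sum (using periodicity of $e^{-i\frac{2\pi t s}{T}}$) should turn this back into $\sum_s \bF_{t,s} k(\bx, G^s(\bx')) = k_t(\bx,\bx')$, giving Hermitian symmetry. I expect this reindexing-plus-invariance manipulation to be the crux of the argument, so I would lay it out carefully.

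For positive semidefiniteness, the idea is that $k_t$ is (up to the scalar $1/T$) a component of the DFT of the $G$-orbit, and the key algebraic fact is that for a $G$-invariant $k$, the $T\times T$ ``circulant-like'' block built from $k(G^a(\bx), G^b(\bx'))$ depends only on $b-a \bmod T$, so the DFT diagonalizes it. Concretely, fix points $\bx_1,\dots,\bx_n$ and scalars $c_1,\dots,c_n \in \Complex$; I want to show $\sum_{i,j} c_i \overline{c_j}\, k_t(\bx_i,\bx_j) \ge 0$. Substituting the definition and using $G$-invariance to write $k(\bx_i, G^s(\bx_j)) = k(G^r(\bx_i), G^{r+s}(\bx_j))$ for every $r$, I would average over $r = 0,\dots,T-1$ and recognize the result as (a positive multiple of) $\big\| \sum_{i,r} c_i e^{-i\frac{2\pi t r}{T}} \phi(G^r(\bx_i)) \big\|^2$ in the RKHS (or feature space) of $k$ — i.e., apply positive definiteness of $k$ itself to the enlarged collection of points $\{G^r(\bx_i)\}_{i,r}$ with coefficients $\{c_i e^{-i\frac{2\pi t r}{T}}\}_{i,r}$. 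The main obstacle is getting the bookkeeping of the exponential factors and the orbit-averaging exactly right so that the cross terms collapse to precisely $k_t$; once the sum is manifestly a squared norm, nonnegativity is automatic. Combined with the Hermitian symmetry, this shows $k_t$ is a Hermitian kernel, completing the proof.
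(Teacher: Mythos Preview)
Your proposal is correct and essentially the same as the paper's approach. The paper packages your orbit-averaging step as a single identity (its Lemma~D.1): $k_t(\bx,\bx') = \sum_{s,s'} \overline{\bF_{t,s}}\,\bF_{t,s'}\, k(G^s(\bx), G^{s'}(\bx'))$, i.e.\ $k_t(\bx,\bx') = \bF_{t,:}^{H}\bK(\bx,\bx')\bF_{t,:}$ with $\bK(\bx,\bx') = [k(G^{s_1}(\bx),G^{s_2}(\bx'))]_{s_1,s_2}$, from which both Hermitian symmetry and positive semidefiniteness follow at once (the latter by exactly your ``enlarged collection'' argument with coefficients $c_i \bF_{t,r}$); your separate reindexing check of Hermitian symmetry and your averaging-to-a-squared-norm argument are just this identity unpacked.
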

The equation follows from the kernel Fourier series of $k(\bx, G^0(\bx'))$ by noticing that $\bF^{-1}_{0,:} = \bone$. To prove that $k_t$ is a kernel, we show that $k_t(\bx, \bx') = \bF_{:, t}^H \bK(\bx, \bx') \bF_{:, t}$, where $\bK(\bx, \bx') = [k(G^{s_1}(\bx), G^{s_2}(\bx'))]_{s_1, s_2=0}^{T-1}$,

Besides the kernel sum decomposition, we further show that the kernels $k_t,\;t=0,...,T-1$ are orthogonal to each other, as identified by the following lemma.

\begin{lemma}[Orthogonality]\label{lem:ortho} For any $0\leq t_1 \neq t_2 \leq T-1$, let $\rkhs_k, \rkhs_{k_{t_1}}, \rkhs_{k_{t_2}}$ be the RKHSs corresponding to the kernel $k, k_{t_1}, k_{t_2}$, respectively. Then for any  $f \in \rkhs_{k_{t_1}}$ and $g \in \rkhs_{k_{t_2}}$, $\innerprod{f}{g}_{\rkhs_k} = 0$.
\end{lemma}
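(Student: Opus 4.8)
The plan is to exhibit an explicit feature-map description of each $\rkhs_{k_t}$ inside $\rkhs_k$ and read off the orthogonality from the orthogonality of the DFT matrix columns. First I would recall, from the proof sketch of \Cref{thm:kern-decomp}, the identity $k_t(\bx,\bx') = \bF_{:,t}^H \bK(\bx,\bx') \bF_{:,t}$, where $\bK(\bx,\bx') = [k(G^{s_1}(\bx), G^{s_2}(\bx'))]_{s_1,s_2=0}^{T-1}$. Since $k$ is $G$-invariant, $k(G^{s_1}(\bx),G^{s_2}(\bx')) = k(G^{s_1-s_2}(\bx),\bx')$, so in fact $k_t(\bx,\bx') = \sum_{s=0}^{T-1} c_{t,s}\, k(G^s(\bx),\bx')$ for suitable coefficients $c_{t,s}$ coming from $\bF$. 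The key structural point is: for $f\in\rkhs_{k_{t_1}}$, the function $\bx\mapsto k_{t_1}(\bx,\bx')$ already lies in $\rkhs_k$ (it is a finite linear combination of the canonical features $k(G^s(\cdot),\bx')$ of $\rkhs_k$), and by \Cref{prop:invariance} it satisfies $k_{t_1}(G(\bx),\bx') = e^{i 2\pi t_1/T} k_{t_1}(\bx,\bx')$. So the whole of $\rkhs_{k_{t_1}}$ — being the closed span of such functions — consists of functions $f\in\rkhs_k$ that are eigenfunctions of the unitary pullback operator $U\colon f\mapsto f\circ G$ on $\rkhs_k$ with eigenvalue $e^{i2\pi t_1/T}$.

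Next I would make that ``$U$ is unitary on $\rkhs_k$'' statement precise: because $k$ is $G$-invariant, the map $U f = f\circ G$ sends $\rkhs_k$ to itself and preserves the RKHS inner product (it sends the canonical feature $k(\cdot,\bx')$ to $k(\cdot,G^{-1}(\bx'))$, using that $G$ is a bijection since it is $T$-cyclic, and $\innerprod{k(\cdot,\ba)}{k(\cdot,\bb)}_{\rkhs_k} = k(\ba,\bb)$ is invariant). Moreover $U^T = \mathrm{Id}$, so $U$ is a unitary operator with $U^T = I$; hence its spectrum lies in the $T$-th roots of unity $\{e^{i2\pi t/T}: t=0,\dots,T-1\}$, and $\rkhs_k$ decomposes as an orthogonal direct sum of the eigenspaces $\rkhs_k = \bigoplus_{t=0}^{T-1} E_t$, $E_t = \ker(U - e^{i2\pi t/T} I)$, with the spectral projection onto $E_t$ given by $P_t = \frac{1}{T}\sum_{s=0}^{T-1} e^{-i2\pi t s/T} U^s$. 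The final step is to identify $\rkhs_{k_{t}}$ with $E_t$: applying $P_t$ to the canonical feature $k(\cdot,\bx')$ and using $U^s k(\cdot,\bx') = k(\cdot,G^{-s}(\bx'))$ together with $G$-invariance gives exactly $k_t(\cdot,\bx')$ (up to matching the DFT sign convention in \eqref{eq:def-kt}), so the feature functions of $k_t$ are precisely $P_t$ applied to the feature functions of $k$, whence $\rkhs_{k_t} = P_t\rkhs_k = E_t$. Orthogonality of distinct eigenspaces of a unitary operator then yields $\innerprod{f}{g}_{\rkhs_k}=0$ for $f\in\rkhs_{k_{t_1}}$, $g\in\rkhs_{k_{t_2}}$, $t_1\neq t_2$.

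The main obstacle is the bookkeeping that connects the abstract spectral projection $P_t$ to the concrete formula \eqref{eq:def-kt} for $k_t$, and — more subtly — justifying the identification $\rkhs_{k_t} = P_t \rkhs_k$ at the level of RKHSs rather than just feature functions: one must check that $P_t$ restricted to $\rkhs_k$ has reproducing kernel exactly $k_t$. The clean way is the standard fact that if $A$ is a bounded self-adjoint (here: orthogonal projection) operator on $\rkhs_k$, then $\mathrm{ran}(A)$ (with the inherited inner product, when $A$ is a projection) is an RKHS with kernel $(\bx,\bx')\mapsto \innerprod{A k(\cdot,\bx')}{A k(\cdot,\bx)}_{\rkhs_k} = \innerprod{A k(\cdot,\bx')}{k(\cdot,\bx)}_{\rkhs_k} = (A k(\cdot,\bx'))(\bx)$; plugging $A = P_t$ and computing $(P_t k(\cdot,\bx'))(\bx)$ gives $k_t(\bx,\bx')$. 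A secondary check is that $G$ being $T$-cyclic indeed makes $G$ invertible with $G^{-1} = G^{T-1}$, which is what makes $U$ invertible and the reindexing $s\mapsto -s$ legitimate; and one should confirm the Hermitian (conjugate-symmetry) property of $k_t$ from \Cref{thm:kern-decomp} is consistent with $P_t$ being an orthogonal projection. With those pieces in place the orthogonality is immediate.
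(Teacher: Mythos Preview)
Your argument is correct and takes a genuinely different route from the paper. The paper proceeds by direct computation: it first shows $\innerprod{k_{t_1}(\cdot,\bx)}{k_{t_2}(\cdot,\bx')}_{\rkhs_k}=0$ by expanding both features via \eqref{eq:def-kt} and using the orthogonality of DFT columns (their Lemma~D.2), then extends to finite linear combinations, and finally to all of $\rkhs_{k_{t_1}},\rkhs_{k_{t_2}}$ by a density/continuity argument citing Moore--Aronszajn and \citet{berlinet2011reproducing}. Your approach is more structural: you recognize the pullback $U\colon f\mapsto f\circ G$ as a unitary on $\rkhs_k$ with $U^T=I$, decompose $\rkhs_k$ into its eigenspaces, and identify $\rkhs_{k_t}$ with an eigenspace via the spectral projector $P_t$. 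One small bookkeeping correction: since $k_t$ is Hermitian and \Cref{prop:invariance} shifts the \emph{second} argument, you get $k_t(G(\bx),\bx')=e^{-i2\pi t/T}k_t(\bx,\bx')$, so $\rkhs_{k_t}$ is the eigenspace for $e^{-i2\pi t/T}$ rather than $e^{+i2\pi t/T}$; this is exactly the sign-convention issue you flagged and does not affect the conclusion. The paper's route is more elementary and self-contained (no operator theory needed), while yours is more conceptual: it explains \emph{why} the orthogonality holds, works verbatim for any finite group action preserving $k$, and in fact already yields the full orthogonal sum decomposition of \Cref{thm:rkhs-decom} as a byproduct of the spectral decomposition of $U$.
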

Because the $\rkhs_k$ inner product of $f \in \rkhs_{1}, g \in \rkhs_{2}$ is always zero, we immediately obtain that the RKHSs for $k_t$ are disjoint except for the function $f \equiv 0$.
\begin{proposition}[Disjoint]\label{prop:disjoint} For any $0\leq t_1 \neq t_2 \leq T-1$,
\begin{align}
    \rkhs_{k_{t_1}} \cap \rkhs_{k_{t_2}} = \{0\},
\end{align}
\end{proposition}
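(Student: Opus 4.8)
The plan is to obtain this as an immediate consequence of \Cref{lem:ortho} by taking the two functions in that lemma to coincide. First I would fix an arbitrary $f \in \rkhs_{k_{t_1}} \cap \rkhs_{k_{t_2}}$. Since $f \in \rkhs_{k_{t_1}}$ and simultaneously $f \in \rkhs_{k_{t_2}}$, I may invoke \Cref{lem:ortho} with $g \defas f$, which yields $\innerprod{f}{f}_{\rkhs_k} = 0$.

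Next I would use that $\innerprod{\cdot}{\cdot}_{\rkhs_k}$ is a genuine (Hermitian) inner product on $\rkhs_k$, hence positive definite, so $\innerprod{f}{f}_{\rkhs_k} = \norm{f}_{\rkhs_k}^2 = 0$ forces $f = 0$ as an element of $\rkhs_k$, i.e.\ $f$ is the zero function on $\xdomain$. The reverse containment is trivial, since the zero function belongs to every RKHS. Combining the two directions gives $\rkhs_{k_{t_1}} \cap \rkhs_{k_{t_2}} = \{0\}$.

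The only point that requires (minor) care — and it is already implicit in the way \Cref{lem:ortho} is phrased — is that $\rkhs_{k_{t_1}}$ and $\rkhs_{k_{t_2}}$ must both be regarded as subspaces of the ambient space $\rkhs_k$, so that $\innerprod{f}{f}_{\rkhs_k}$ is meaningful for $f$ drawn from either of them. This is supplied by the additive decomposition $k = \sum_{t} k_t$ of \Cref{thm:kern-decomp} together with Aronszajn's theorem on sums of reproducing kernels, which gives $\rkhs_{k_t} \subseteq \rkhs_k$ for each $t$ (indeed this inclusion is part of the setup used to prove \Cref{lem:ortho}). There is no substantive obstacle here beyond this bookkeeping; the proposition is essentially a restatement of the non-degeneracy of the $\rkhs_k$-inner product applied to the orthogonality already established.
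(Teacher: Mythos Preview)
Your proof is correct and follows essentially the same route as the paper: both take $f$ in the intersection, apply \Cref{lem:ortho} with $g=f$ to get $\innerprod{f}{f}_{\rkhs_k}=0$, and conclude $f=0$ by positive definiteness. Your version is actually a touch more careful in flagging the inclusion $\rkhs_{k_t}\subseteq\rkhs_k$ via Aronszajn's theorem, which the paper leaves implicit.
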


The kernel decomposition and orthogonality translate to the RKHS orthogonal sum decomposition as follows:
\begin{theorem}[Orthogonal Sum Decomposition of RKHS]\label{thm:rkhs-decom} The RKHS $\rkhs_k$ admits an orthogonal sum decomposition,
\begin{align}
    \rkhs_k = \bigoplus_{t=0}^{T-1} \rkhs_{k_t}.
\end{align}
Specifically, for any function $f \in \rkhs_k $, $f$ has the unique decomposition $f = \sum_{t=0}^{T-1} f_t,\; f_t \in \rkhs_{k_t}$, and
$f_t(\bx) = \innerprod{f}{k_t(\bx, \cdot)}_{\rkhs_k}$.
The RKHS norm of $f$ is equal to
\begin{align}
    \|f\|^2_{\rkhs_k} = \sum_{t=0}^{T-1} \|f_t\|^2_{\rkhs_{k_t}}. 
\end{align}
\end{theorem}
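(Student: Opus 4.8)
The plan is to reduce the statement to the classical theorem of Aronszajn on sums of reproducing kernels and then read off everything from the orthogonality results already established. Recall that if a Hermitian positive-definite kernel splits as $k = k^{(1)} + k^{(2)}$, then $\rkhs_k = \{f_1 + f_2 \st f_i \in \rkhs_{k^{(i)}}\}$ as sets of functions, $\|f\|^2_{\rkhs_k} = \min\{\|f_1\|^2_{\rkhs_{k^{(1)}}} + \|f_2\|^2_{\rkhs_{k^{(2)}}} \st f = f_1 + f_2\}$ with the minimum attained, and each inclusion $\rkhs_{k^{(i)}} \hookrightarrow \rkhs_k$ is a norm contraction (because $k - k^{(i)}$ is again positive definite). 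Iterating this over the $T$ Hermitian kernels $k_0, \dots, k_{T-1}$ provided by \Cref{thm:kern-decomp} immediately gives $\rkhs_k = \sum_{t=0}^{T-1} \rkhs_{k_t}$ and $\|f\|^2_{\rkhs_k} = \min\{\sum_t \|f_t\|^2_{\rkhs_{k_t}} \st f = \sum_t f_t,\ f_t \in \rkhs_{k_t}\}$. So the existence of a decomposition $f = \sum_t f_t$ is for free, and it remains to show it is unique, to evaluate the norm, to upgrade the sum to an orthogonal one, and to derive the projection formula.

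For uniqueness, suppose $\sum_t f_t = \sum_t g_t$ with $f_t, g_t \in \rkhs_{k_t}$, and put $h_t = f_t - g_t \in \rkhs_{k_t}$, so $\sum_t h_t = 0$. Fixing an index $t_0$ and using $h_{t_0} = -\sum_{t \neq t_0} h_t$ together with \Cref{lem:ortho} (which gives $\innerprod{h_{t_0}}{h_t}_{\rkhs_k} = 0$ for $t \neq t_0$), I obtain $\|h_{t_0}\|^2_{\rkhs_k} = \innerprod{h_{t_0}}{-\sum_{t \neq t_0} h_t}_{\rkhs_k} = -\sum_{t \neq t_0} \innerprod{h_{t_0}}{h_t}_{\rkhs_k} = 0$; since point evaluation is continuous on $\rkhs_k$ this forces $h_{t_0} \equiv 0$ as a function, hence $h_{t_0} = 0$ in $\rkhs_{k_{t_0}}$ as well. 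So the decomposition is unique, and the Aronszajn minimum above is over a single point, giving $\|f\|^2_{\rkhs_k} = \sum_t \|f_t\|^2_{\rkhs_{k_t}}$. Applying this with $f$ itself in $\rkhs_{k_t}$ (whose unique decomposition places $f$ in slot $t$ and $0$ elsewhere) shows $\|\cdot\|_{\rkhs_k}$ and $\|\cdot\|_{\rkhs_{k_t}}$ agree on $\rkhs_{k_t}$, i.e.\ the inclusion is isometric, and then by polarization the two inner products agree on $\rkhs_{k_t}$. Combined with \Cref{lem:ortho}, this exhibits the $\rkhs_{k_t}$ as mutually orthogonal closed subspaces of $\rkhs_k$ whose sum is all of $\rkhs_k$, which is exactly the assertion $\rkhs_k = \bigoplus_{t=0}^{T-1} \rkhs_{k_t}$; the norm identity is then just Pythagoras.

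For the projection formula, note $k_t(\bx, \cdot) \in \rkhs_{k_t} \subseteq \rkhs_k$, so expanding $f = \sum_s f_s$ yields $\innerprod{f}{k_t(\bx, \cdot)}_{\rkhs_k} = \sum_s \innerprod{f_s}{k_t(\bx, \cdot)}_{\rkhs_k} = \innerprod{f_t}{k_t(\bx, \cdot)}_{\rkhs_{k_t}} = f_t(\bx)$, where the terms with $s \neq t$ vanish by \Cref{lem:ortho}, the surviving term uses the isometry just established, and the last equality is the reproducing property in $\rkhs_{k_t}$.

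The only genuinely delicate point is invoking Aronszajn's sum-of-kernels theorem in the complex/Hermitian setting and iterating it cleanly to $T$ summands; once that is set up, uniqueness, the norm identity, orthogonality, and the projection formula all fall out of \Cref{lem:ortho}. Incidentally, \Cref{prop:disjoint} is then a byproduct of uniqueness: any $f \in \rkhs_{k_{t_1}} \cap \rkhs_{k_{t_2}}$ with $t_1 \neq t_2$ admits the two decompositions placing $f$ in slot $t_1$, resp.\ slot $t_2$, so $f = 0$.
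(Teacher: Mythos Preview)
Your proof is correct and, modulo Aronszajn's theorem, entirely self-contained. It is, however, organized differently from the paper's argument. The paper works bottom-up: it first treats functions of the form $f = \sum_s a_s k(\cdot,\bx^s)$, decomposes them as $f = \sum_t f_t$ with $f_t = \sum_s a_s k_t(\cdot,\bx^s)$, verifies the projection formula on this dense subspace using orthogonality, and then passes to general $f \in \rkhs_k$ by a limit argument (citing \citet[Lemma~5]{berlinet2011reproducing}); the norm identity is quoted at the end from \citet[Theorem~5]{berlinet2011reproducing}. Uniqueness is obtained via \Cref{prop:disjoint} (disjointness of the $\rkhs_{k_t}$).

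You instead go top-down: Aronszajn's sum-of-kernels theorem gives existence of a decomposition and the minimum-norm formula in one stroke, and then \Cref{lem:ortho} alone forces uniqueness, which collapses the minimum to a single term and yields the norm identity. From there you extract that each inclusion $\rkhs_{k_t}\hookrightarrow\rkhs_k$ is isometric (a point the paper does not make explicit), which lets you compute the projection formula cleanly via the reproducing property in $\rkhs_{k_t}$. Your route avoids the density-and-limits step entirely and shows that \Cref{prop:disjoint} is a consequence rather than an ingredient; the paper's route is more elementary in that it never explicitly invokes Aronszajn for existence, though it does cite the equivalent result for the norm identity.
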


\subsection{Examples of Harmonic Kernel Decomposition}\label{subsec:example}
The HKD relies on the $(G, k)$ pair where $G$ is a $T$-cyclic transformation and $k$ is a kernel invariant to $G$. 
In this section we provide examples of such kernels and transformations. 
Notably, all inner-product kernels and stationary kernels\footnote{This includes, e.g., polynomial, Gaussian, Mat\'ern, periodic, arccosine, and rational quadratic kernels.} can be decomposed with the HKD when paired with an appropriately chosen $G$.


\paragraph{An opening example.} We start with a toy example to illustrate the kernel decomposition. Let $k(\theta, \theta') = e^{-i(\theta - \theta')} + e^{-2i(\theta - \theta')}$ for $\theta \in [0, 2\pi)$. The transformation $G(\theta) = (\theta + \frac{2\pi}{T}) \mod 2\pi$ is  $T$-cyclic. Based on the kernel Fourier series, we obtain $k_{1}(\theta, \theta') = e^{-i(\theta - \theta')}$, $k_{2}(\theta, \theta') = e^{-2i(\theta - \theta')}$, and $k_t = 0$ otherwise. We observe that the RKHS of $k_2$ contains periodic functions with basic period $\pi$, while the RKHS of $k_1$ contains functions with basic period $2\pi$. 
In this way, our method decomposes $\rkhs_k$ into orthogonal RKHSs. 

\textbf{Inner-product kernels} are kernels of the form,
\begin{align}
    k(\bx, \bx') = h(\bx^H\bx, \bx^H\bx', \bx'^H\bx'),
\end{align}
where the function $h$ ensures that $k$ is positive semi-definite \citep{hofmann2008kernel}. For a matrix $\bR \in \Complex^{d \times d}$, which is unitary (i.e.~$\bR \bR^{H} = \bI$), 
the kernel $k$ is $G$-invariant:
\begin{align}
    \!k(G(\bx), G(\bx'))\!&=\!h(\bx^H \bR^H \bR \bx, \bx^H \bR^H \bR \bx', \bx'^H \bR^H \bR  \bx') \notag \\
    \!&=\! h(\bx^H\bx, \bx^H\bx', \bx'^H\bx') = k(\bx, \bx') \notag,
\end{align}
Examples include reflections, rotations, and permutations. 
Moreover, if $\underbrace{\bR \cdots \bR}_T = \bI$, 
the mapping $G(\bx) = \bR \bx$ is  $T$-cyclic. 

\textbf{Stationary kernels} are kernels of the form,
\begin{align}
    k(\bx, \bx') = \kappa(\bx - \bx'),
\end{align}
where $\kappa$ is a positive-type function \citep{berlinet2011reproducing}. Let $T=2$ and $G(\bx) = -\bx$; then $G$ is $T$-cyclic. For real kernels whose $k(\bx, \bx') \in \Reals$, the kernel is symmetric (i.e.~$k(\bx, \bx')=k(\bx', \bx)$). Then $k$ is $G$-invariant:
\begin{align}
    k(G(\bx), G(\bx')) = \kappa(\bx' - \bx) = k(\bx', \bx) = k(\bx, \bx'). \notag 
\end{align} 
Similarly, we can prove that inner-product kernels are negation-invariant. 
Stationary kernels are also invariant to the $T$-cyclic transformation $G_i(\bx) = \bx + \frac{2\pi}{T} \be_i$ on a multidimensional torus
$\Torus^d = \underbrace{\Sphere^1 \times \cdots \times \Sphere^1}_d$, where $\Sphere^1$ represents a one-dimensional circle. 

\subsection{Resolving Complex-Valued Kernels}\label{subsec:resolve-hermitian}

From $\bF = [\frac{1}{T} e^{-i\frac{2\pi t s}{T} } ]_{t,s=0}^{T-1}$, we know that the DFT introduces complex values whenever $T > 2$. 
Therefore, $k_t$ is Hermitian but not necessarily real-valued. 
For example, the decomposition in Fig.~\ref{fig:rbf-rotate-gp} introduces imaginary values when $t = 1, 3$.
Since $k$ is real-valued, we can obtain a real-valued kernel decomposition by pairing up $k_t$s.
Specifically, for a $T$-cyclic transformation $G$, we have
\begin{align}
	(k_t + k_{T-t})(\bx, \bx') = \frac{2}{T}\sum_{s=0}^{T-1} \cos(\frac{2\pi t s}{T}) k(\bx, G^s(\bx')), \notag
\end{align}
In this way we obtain a real-valued decomposition with $\lfloor T/2 \rfloor+1$ kernels.
%

\subsection{Multi-Way Transformations}\label{subsec:multi-dim}

Previously we considered the Fourier series along one transformation orbit: $k(\bx, G^0(\bx')), ..., k(\bx, G^{T-1}(\bx'))$. We can extend it to multi-way transformations, akin to a multidimensional DFT. Let $T_1, ..., T_J \in \Nats$ and $G_j$ be a $T_j$-cyclic transformation for $j=1,...,J$, respectively. We further assume that all transformations commute, i.e., $\forall 1\leq j_1, j_2 \leq J$, 
\begin{align}
\forall \bx \in \xdomain, G_{j_1} (G_{j_2} (\bx)) =  G_{j_2} (G_{j_1} (\bx)).
\end{align}
Due to commutativity, we can use the indices $(t_1, ..., t_J)$ to represent applying each $G_j$ for $t_j$ times, 
\begin{align}
    G^{(t_1, ..., t_J)}(\bx) \defas G_1^{t_1}\cdots G_J^{t_J}(\bx),
\end{align}
where $G:= G_1 \otimes \cdots \otimes G_J$. Moreover, if a kernel $k$ is $G_j$-invariant for all $j=1,...,J$, then $k$ is $G$-invariant. 

Letting $t=(t_1, ..., t_J)$ be a multi-index, we compute the $J$-way kernel Fourier series from a multidimensional DFT:
\begin{align}
    k_{t}(\bx, \bx') 
    = \sum_{s=(0, ..., 0)}^{(T_1 - 1, ..., T_J-1)} \prod_{j=1}^J \bF^{(j)}_{t_j, s_j} k(\bx, G^{s}(\bx')), \notag
\end{align}
where $\bF^{(j)} \in \mathbb{C}^{T_j \times T_j}$ is the DFT matrix of 
the $j$-th transformation. 
Similar to \Cref{thm:kern-decomp}, these $k_t$s also form an HKD of $k$. 

Taking a two-dimensional RBF kernel as an example, we can check that it is invariant to negation along either dimension: $G_1([x_1, x_2]^{\top}) = [-x_1, x_2]^{\top}, G_2([x_1, x_2]^{\top}) = [x_1, -x_2]^{\top} $. Because $G_1$ and $G_2$ commute, this forms a 2-way transformation $G = G_1 \otimes G_2$, which corresponds to an HKD with $2 \times 2 = 4$ sub-kernels.

\section{Harmonic Variational Gaussian Processes}\label{sec:hvgp}
In this section, we explore the implications of the HKD, and propose a scalable inference strategy for variational Gaussian processes. All proofs can be found in Sec~\ref{app:prof:lemma-dvi} in the appendix.

\subsection{Variational Inference for Decomposed GPs }\label{subsec:hf-gp}
Given the kernel decomposition\footnote{$t$ can be a multi-index for multi-way transformations.} $k = \sum_{t=0}^{T-1} k_t$, the Gaussian process can be represented in an additive formulation,
\begin{align}\label{eq:additive-gp}
    f = \sum_{t=0}^{T-1} f_t, \quad f_t \sim \GP(0, k_t). 
\end{align}
For $t=0,...,T-1$, we introduce inducing points $\bZ_t$ and denote by $\bu_t := f_t(\bZ_t)$ the inducing variables. Let $p_t$ represent $\GP(0, k_t)$. We consider an augmented model,
\begin{align}\label{eq:joint-ll}
    f &= \sum_{t=0}^{T-1} f_t, \quad
    p_t(f_t(\cdot), \bu_t) = p_t(f_t(\cdot) | \bu_t) p_t(\bu_t), 
\end{align}
and define the variational posterior approximation as
\begin{equation}
\begin{aligned} \label{eq:full-q-h}
    f &= \sum_{t=0}^{T-1} f_t, \\
    f_t(\cdot) &\sim p_t(f_t(\cdot) | \bu_t), \; \bu_{0:T-1} \sim q(\bu_{0:T-1}). 
\end{aligned}
\end{equation}
To understand how well this variational distribution approximates the true GP posterior, 
we compare it with a standard SVGP, for which 
the quality of approximation has been studied extensively by \citet{burt2019rates}. 
For simplicity, we focus our analysis on the case where inducing points are shared across all subprocesses: $\bZ_0 = ... = \bZ_{T-1} := \bZ$, and we assume a complex-valued kernel decomposition without using the techniques in \Cref{subsec:resolve-hermitian}.
Then, we demonstrate that Eq.~\eqref{eq:full-q-h}
is equivalent to an SVGP with inducing points $\{G^t(\bZ)\}_{t=0}^{T-1}$.
\begin{theorem}\label{lem:q-equi} Consider an SVGP with inducing points $\{G^t(\bZ)\}_{t=0}^{T-1}$. 
Let $\bv_t := f(G^t(\bZ))$ be the inducing variables and $\bV:=[\bv_0, ..., \bv_{T-1}]^{\top} \in \Complex^{T \times m}$. 
Suppose its variational distribution is
\begin{align}
    q_{\mathrm{svgp}}(f(\cdot), \bV) &= p(f(\cdot)|\bV) \normal(\vectorize{\bV}| \vectorize{\bM_{v}}, \bS_v),  \notag 
\end{align}
where $\bM_v \in \Complex^{T \times m}$, $\bS_v \in \Complex^{Tm \times Tm}$ are the mean and covariance, respectively. 
Let $\bU:=[\bu_0, ..., \bu_{T-1}]^{\top} \in \Complex^{T \times m}$. 
Then, Eq.~\eqref{eq:full-q-h} 
and 
$q_{\mathrm{svgp}}$ have the same marginal distribution of $f(\cdot)$ if $q(\bu_{0:T-1})$ is defined as
\begin{align}
    q(\vectorize{\bU}) = \normal(\vectorize{\bF^H \bM_v}, (\bI \otimes \bF^H) \bS_v (\bI \otimes \bF)). \notag 
\end{align}
\end{theorem}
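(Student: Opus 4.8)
The plan is to reduce the claim to an explicit deterministic linear bijection between the stacked subprocess inducing variables $\bU$ and the stacked SVGP inducing variables $\bV$, and then to check that through this bijection the two variational recipes specify literally the same conditional law of $f(\cdot)$ and that the inducing-variable marginals are pushforwards of one another. The conceptual core is an \emph{equivariance} property of each subprocess: for every $0\le t\le T-1$ and every $s$, $f_t(G^s(\bx)) = e^{-i2\pi ts/T} f_t(\bx)$ almost surely. To prove this I would argue at the level of second moments: with $Z = f_t(G^s(\bx)) - e^{-i2\pi ts/T}f_t(\bx)$, the quantity $\expect[Z\overline{Z}]$ depends only on the Hermitian kernel $k_t$; iterating \Cref{prop:invariance} gives $k_t(\bx, G^s(\bx')) = e^{i2\pi ts/T}k_t(\bx,\bx')$, and combining with the Hermitian symmetry of $k_t$ gives $k_t(G^s(\bx),\bx') = e^{-i2\pi ts/T}k_t(\bx,\bx')$ and hence $k_t(G^s(\bx),G^s(\bx')) = k_t(\bx,\bx')$; substituting these three identities into the expansion of $\expect[Z\overline{Z}]$ makes it vanish, so $Z = 0$ a.s.

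Granting equivariance, the bijection is immediate. Since $\bZ_0 = \cdots = \bZ_{T-1} = \bZ$, we have $\bv_s = f(G^s(\bZ)) = \sum_{t=0}^{T-1} f_t(G^s(\bZ)) = \sum_{t=0}^{T-1} e^{-i2\pi ts/T}\bu_t$, and because $T\bF_{t,s} = e^{-i2\pi ts/T}$ with $\bF$ symmetric, in matrix form this is $\bV = T\bF\bU$, equivalently $\bU = \tfrac1T\bF^{-1}\bV = \bF^H\bV$ using $\bF^{-1} = T\bF^H$; that is, $\vectorize{\bU} = (\bI\otimes\bF^H)\vectorize{\bV}$, an invertible linear map. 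In particular $\sigma(\bV) = \sigma(\bU) = \sigma(f_0(\bZ),\dots,f_{T-1}(\bZ))$. Now in the augmented model \eqref{eq:joint-ll} the subprocesses $f_t(\cdot)$ are independent and $\bu_t$ is a function of $f_t$ alone, so conditioning on $\bU$ keeps the $f_t(\cdot)$ conditionally independent with $f_t(\cdot)\mid\bu_t\sim p_t(f_t(\cdot)\mid\bu_t)$; hence $p(f(\cdot)\mid\bU)$ is exactly the recipe \eqref{eq:full-q-h}, ``$f = \sum_t f_t$ with $f_t(\cdot)\sim p_t(f_t(\cdot)\mid\bu_t)$''. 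On the other hand, writing the SVGP prior as $f = \sum_t f_t$ via \eqref{eq:additive-gp} and using $\sigma(\bV) = \sigma(\bU)$, the prior conditional $p(f(\cdot)\mid\bV)$ appearing in $q_{\mathrm{svgp}}$ equals the very same object. So the two variational families have identical conditional laws of $f(\cdot)$, indexed through $\bV = T\bF\bU$.

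Finally I would match the inducing-variable marginals and conclude. The distribution $q(\vectorize{\bU}) = \normal(\vectorize{\bF^H\bM_v},(\bI\otimes\bF^H)\bS_v(\bI\otimes\bF))$ in the statement is precisely the pushforward of $q_{\mathrm{svgp}}(\vectorize{\bV}) = \normal(\vectorize{\bM_v},\bS_v)$ under the linear map $\vectorize{\bV}\mapsto(\bI\otimes\bF^H)\vectorize{\bV}$, since $(\bI\otimes\bF^H)\vectorize{\bM_v} = \vectorize{\bF^H\bM_v}$ and $(\bI\otimes\bF^H)^H = \bI\otimes\bF$. Combining with the previous paragraph, $\int p(f(\cdot)\mid\bV)\,q_{\mathrm{svgp}}(\bV)\,d\bV = \int p(f(\cdot)\mid\bU)\,q(\bU)\,d\bU$, so the induced marginals on $f(\cdot)$ coincide, which is the claim. \textbf{The main obstacle} is the equivariance identity of the first paragraph --- in particular keeping the phase conventions of \Cref{prop:invariance}, the Hermitian symmetry of $k_t$, and the DFT matrix $\bF$ mutually consistent; once that is in hand, the observation that $\bU\mapsto T\bF\bU$ is invertible makes the two conditioning $\sigma$-algebras literally equal, so no explicit computation of predictive means or covariances is needed.
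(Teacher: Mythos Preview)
Your proposal is correct and follows the same overall architecture as the paper's proof: establish the deterministic linear bijection $\bU=\bF^{H}\bV$, infer from it that the two conditionals $p(f(\cdot)\mid\bV)$ and $\prod_t p_t(f_t(\cdot)\mid\bu_t)$ describe the same object, and push forward the Gaussian on $\bV$ to the stated Gaussian on $\bU$.

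The one genuine difference is how the bijection is obtained. The paper proves it as Lemma~\ref{app:lem:gp-decomp-value}, \emph{viz.}\ $f_t(\bx)=\sum_s\bF^{H}_{t,s}f(G^{s}(\bx))$, by writing out the full covariance $[k_t(G^{j}(\bx),G^{j'}(\bx))]_{j,j'}$, representing each block $\bbf_t(G^{0:T-1}(\bx))$ through a single scalar noise, and verifying the identity coordinatewise; it then passes through an inter-domain inducing-point formulation to identify the conditionals. You instead prove the dual statement --- the pointwise equivariance $f_t(G^{s}(\bx))=e^{-i2\pi ts/T}f_t(\bx)$ a.s.\ --- directly from \Cref{prop:invariance} and Hermitian symmetry via the second-moment calculation $\expect[Z\overline{Z}]=0$, and then use the $\sigma$-algebra equality $\sigma(\bU)=\sigma(\bV)$ in place of the inter-domain device. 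Your route is shorter and avoids both the explicit joint-distribution bookkeeping and the auxiliary inter-domain construction; the paper's route, on the other hand, makes the inter-domain viewpoint explicit, which is independently useful elsewhere in the appendix. Either way the phase conventions line up exactly as you checked: $T\bF_{t,s}=e^{-i2\pi ts/T}$, $\bF^{-1}=T\bF^{H}$, and $(\bI\otimes\bF^{H})^{H}=\bI\otimes\bF$.
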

The proof is based on showing the bijective linearity $\bU = \bF^H \bV$. 
Since the theorem assumes shared inducing points, our variational approximation in Eq.~\eqref{eq:full-q-h} 
has a larger capacity than
SVGPs with inducing points $\{G^t(\bZ)\}_{t=0}^{T-1}$. 
Therefore, if the inducing points $\{G^t(\bZ)\}_{t=0}^{T-1}$ match the input distribution well, our variational posterior can approximate the true posterior accurately.


\subsection{Harmonic Variational Gaussian Processes}\label{subsec:hvgp}

\begin{figure}[t]
\centering
\includegraphics[width=\columnwidth]{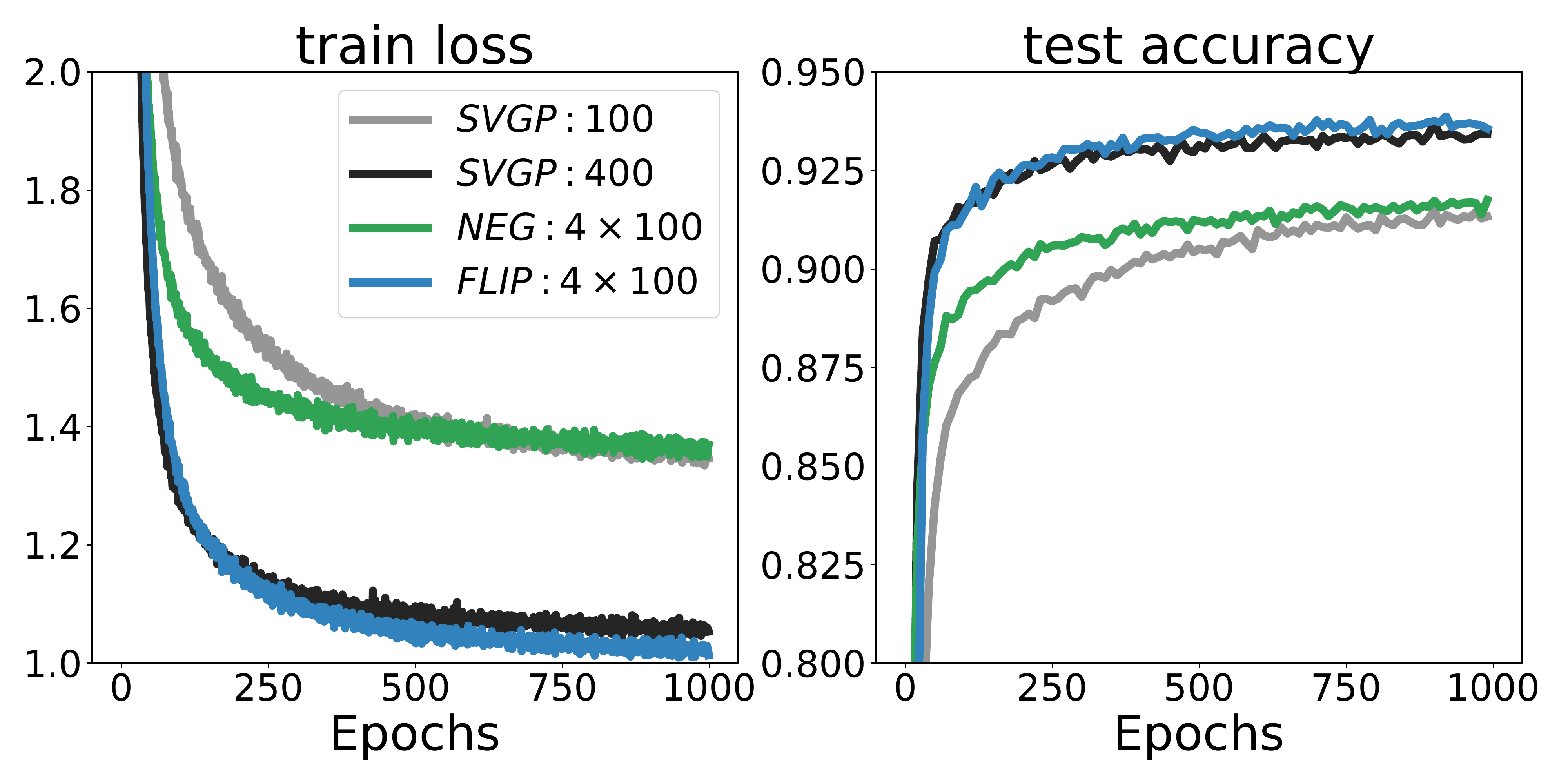}
\caption{Flip-MNIST.  We plot how \textit{left}: train loss and \textit{right}: test accuracy evolve with training. We observe the HVGP using \textit{FLIP: $4 \times 100$} perform similarly with the SVGP using \textit{$400$} inducing points while the HVGP using \textit{NEG: $4 \times 100$} perform substantially worse. \label{fig:flipmnist}}
\end{figure} 

The additive GP reformulation in Eq.~\eqref{eq:additive-gp} ensures the independence between $\bu_{0:T-1}$ in the prior, and we demonstrated the orthogonality of the decomposed RKHSs in Theorem~\ref{thm:rkhs-decom}. Thus it is tempting to modelling the variational posterior separately within each RKHS. Now we introduce the Harmonic Variational Gaussian Process (HVGPs), which enforces independence between $\bu_t$ by letting $q(\bu_{0:T-1}) = \prod_{t=0}^{T-1} q_t(\bu_t)$. Then the variational posterior becomes
\begin{align} \label{eq:diag-q-h}
    f &= \sum_{t=0}^{T-1} f_t, \;
    q_t(f_t(\cdot), \bu_t) = p_t(f_t(\cdot) | \bu_t) q_t(\bu_t). 
\end{align}
In other words, HVGPs use a variational posterior independently for each GP. We set $q_t(\bu_t) = \normal(\bmu_t, \bS_t)$ as Gaussians. In particular, if each $q_t$ uses $m$ inducing points, we term the model a $T\times m$ model. The variational posterior can be optimized by maximizing the ELBO,
\begin{align*}
    \resizebox{\columnwidth}{!}{$\displaystyle
    \expect_{q(\{f_t\}^{T-1}_{t=0})} [\log p(\by|\sum_{t=0}^{T-1}f_t, \bX)]  - \sum_{t=0}^{T-1} \KL{q_t(\bu_t)}{p_t(\bu_t)}.$}
\end{align*}
where $q(\{f_t(\cdot)\}^{T-1}_{t=0}) := \prod_{t=0}^{T - 1} \int p_t(f_t(\cdot)|\bu_t)q_t(\bu_t) d\bu_t$.

How well does $\prod_{t=0}^{T-1} q_t(\bu_t)$  approximate the optimal Gaussian variational posterior $q^{\star}(\bu_{0:T-1})$? 
$q^{\star}$ has a covariance $\bS^{\star} \in \Reals^{Tm \times Tm}$, while HVGPs induce block diagonal structures. 
Fortunately, we can show that $\bS^{\star}$ is approximately block diagonal if the input distribution is invariant to $G$.

\begin{theorem}\label{lem:decoupled} If the input distribution $p$ is invariant to $G$, i.e., the random variable $\bx \sim p$ and the random variable $G(\bx), \bx \sim p$ are identically distributed, then 
$\bS^{\star}$ becomes block diagonal when the training size $N \to \infty$.
\end{theorem}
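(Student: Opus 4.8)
The plan is to write $\bS^\star$ in closed form in the regression (Gaussian-likelihood) case, reduce its block-diagonality to an orthogonality property of the decomposed kernels integrated against the input distribution $p$, and then derive that orthogonality from the shift property (\Cref{prop:invariance}) together with the $G$-invariance of $p$. (I work with the complex-valued decomposition, as in the setup preceding \Cref{lem:q-equi}; the real-valued resolution of Section~\ref{subsec:resolve-hermitian} is handled identically.)

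First I would observe that, with $\by \mid \bbf \sim \normal(\bbf, \sigma^2 \bI)$, the variational family \eqref{eq:full-q-h} is, by \Cref{lem:q-equi} (or directly from the augmented model), an SVGP whose inducing variables are the stacked vector $\vectorize{\bU}$: the prior covariance $\bK_{uu} = \diag(\bK^{(0)}_{uu}, \dots, \bK^{(T-1)}_{uu})$, $\bK^{(t)}_{uu} = [k_t(\bz_i,\bz_j)]_{ij}$, is block-diagonal because the $f_t \sim \GP(0,k_t)$ are independent, and the cross-covariance is $[\bK_{uf}]_{(t,i),n} = \Cov(f_t(\bz_i), f(\bx_n)) = k_t(\bz_i,\bx_n)$. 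Hence the optimal Gaussian posterior is the standard collapsed (Titsias) solution
\begin{equation*}
  \bS^\star = \bK_{uu}\bigl(\bK_{uu} + \sigma^{-2}\bK_{uf}\bK_{fu}\bigr)^{-1}\bK_{uu}.
\end{equation*}
Since $\bK_{uu}$ is block-diagonal and invertible, $\bS^\star$ is block-diagonal if and only if $\bK_{uf}\bK_{fu}$ has vanishing off-diagonal blocks, and as each $k_t$ is Hermitian (\Cref{thm:kern-decomp}) one has $[\bK_{uf}\bK_{fu}]_{(t_1,i),(t_2,j)} = \sum_{n=1}^N k_{t_1}(\bz_i,\bx_n)\,\overline{k_{t_2}(\bz_j,\bx_n)}$.

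Next I would take $N\to\infty$. With $\bx_1,\dots,\bx_N\distiid p$ and a mild moment condition, the strong law of large numbers gives $\tfrac1N[\bK_{uf}\bK_{fu}]_{(t_1,i),(t_2,j)}\to\EE_{\bx\sim p}[k_{t_1}(\bz_i,\bx)\,\overline{k_{t_2}(\bz_j,\bx)}]$ almost surely. The crux is that this limit vanishes for $t_1\neq t_2$: because $p$ is $G$-invariant, $\bx\equaldist G(\bx)$, so the expectation is unchanged under $\bx\mapsto G(\bx)$; but \Cref{prop:invariance} gives $k_{t_1}(\bz_i,G(\bx)) = e^{i2\pi t_1/T}k_{t_1}(\bz_i,\bx)$ and $\overline{k_{t_2}(\bz_j,G(\bx))} = e^{-i2\pi t_2/T}\,\overline{k_{t_2}(\bz_j,\bx)}$, so the same expectation also equals $e^{i2\pi(t_1-t_2)/T}$ times itself; as $0\le t_1\neq t_2\le T-1$ forces $e^{i2\pi(t_1-t_2)/T}\neq 1$, it must be $0$. (This is the $L^2(p)$ counterpart of the RKHS orthogonality in \Cref{lem:ortho}.) Thus $\tfrac1N\bK_{uf}\bK_{fu}\to\bM$ with $\bM$ block-diagonal, and $N\bS^\star = \bK_{uu}\bigl(\tfrac1N\bK_{uu}+\sigma^{-2}\tfrac1N\bK_{uf}\bK_{fu}\bigr)^{-1}\bK_{uu}\to\sigma^2\bK_{uu}\bM^{-1}\bK_{uu}$ (using $\tfrac1N\bK_{uu}\to\bzero$ and assuming $\bM$ nonsingular), which is block-diagonal; equivalently, the off-diagonal blocks of $\bS^\star$ become negligible relative to its diagonal blocks.

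I expect the main obstacle to be not any one algebraic step but making the asymptotic claim precise: one must fix the i.i.d. sampling model and integrability behind the SLLN, and read ``$\bS^\star$ becomes block diagonal'' as a statement about the renormalized limit $N\bS^\star$ (since $\bS^\star\to\bzero$), i.e.\ about the off-diagonal blocks being negligible relative to the diagonal ones. A more structural alternative uses \Cref{lem:q-equi}: the equivalent SVGP lives on $\{G^t(\bZ)\}_{t=0}^{T-1}$; $G$-invariance of $k$ makes $\bK_{vv}$ exactly block-circulant in the orbit index $t$, and $G$-invariance of $p$ makes the limiting data term block-circulant too, so, since circulant blocks are simultaneously diagonalized by the DFT matrix $\bF$, the change of variables $\bU=\bF^H\bV$ of \Cref{lem:q-equi} turns the (asymptotically) block-circulant covariance of $\bV$ into the (asymptotically) block-diagonal $\bS^\star$. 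Both routes reduce to the same $G$-invariance computation above.
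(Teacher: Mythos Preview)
Your argument is correct and follows the same skeleton as the paper: write $\bS^\star=\bK_{uu}(\bK_{uu}+\text{data term})^{-1}\bK_{uu}$, note $\bK_{uu}$ is block-diagonal, and show the off-diagonal blocks of the data term vanish under $G$-invariance of $p$ as $N\to\infty$. Two differences are worth flagging. First, you specialize to the Gaussian likelihood (data term $=\sigma^{-2}\bK_{uf}\bK_{fu}$), whereas the paper proves a preparatory lemma giving the optimal covariance for a general likelihood as $\bS^\star=\bK_{uu}(\bK_{uu}+\bK_{uf}\bLambda\bK_{fu})^{-1}\bK_{uu}$ with a diagonal $\bLambda$ depending on $(\bx_n,y_n)$; this makes the paper's statement broader, at the cost of carrying $\hat\lambda(\bx)$ through the invariance calculation. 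Second, your key step is cleaner: you invoke the shift identity of \Cref{prop:invariance} to get $\EE_p[k_{t_1}(\bz,\bx)\overline{k_{t_2}(\bz',\bx)}]=e^{i2\pi(t_1-t_2)/T}\EE_p[\cdots]$ and conclude immediately, while the paper expands $k_t$ via the DFT definition and reduces to the column-orthogonality $\sum_s \bF_{t,s}\bF^H_{t',s+s'}=0$. Both are the same computation, but yours is the natural ``character orthogonality'' phrasing. Finally, you are more explicit than the paper about what ``becomes block diagonal'' means asymptotically (the paper simply appeals to the continuous mapping theorem after showing $\tfrac1N\bK_{uf}\bLambda\bK_{fu}$ is block diagonal in the limit); your discussion of the $N\bS^\star$ normalization and the block-circulant alternative via \Cref{lem:q-equi} is a welcome clarification rather than a departure.
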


Theorem~\ref{lem:decoupled} indicates that the independent variational distributions $\prod_{t=0}^{T-1} q_t(\bu_t)$ in HVGPs accurately approximate $q^{\star}(\bu_{0:T-1})$
when the input distribution 
is symmetric under the transformation $G$. The symmetry further makes it easy for $\{G^t(\bZ)\}_{t=0}^{T-1}$ to match the input distribution, which by Theorem~\ref{lem:q-equi} renders that the variational approximation in Eq.~\eqref{eq:full-q-h} with the optimal $q^{\star}(\bu_{0:T-1})$ would be close to the true posterior. 

We illustrate the gist with a \textit{flip-mnist} problem, where each digit in the MNIST dataset is randomly flipped up-and-down or left-and-right. 
For a RBF kernel, we consider two variations of HVGPs in terms of the transformation: 1) Negation. We split input dimensions into two groups and negate them separately, resulting in a $4\times 100$ model. 2) Flipping the image up-and-down or left-and-right, resulting in a $4\times 100 $ model. We compare them with SVGPs using $100, 400$ inducing points, shown in Figure~\ref{fig:flipmnist}.
This experiment highlights the importance of matching the transformation $G$ with the data distribution.

\subsection{Computational Cost}

Assume that we have a $J$-way transformation, and each way is $\tilde{T}_j$-cyclic.
After decomposition this results in $\tilde{T} = \prod_{j=1}^J \tilde{T}_j$ complex-valued kernels and subsequently, $T = \prod_{j=1}^J (\lfloor  \tilde{T}_j / 2 \rfloor + 1) \geq \tilde{T} / 2^J$ real-valued kernels.
Let $T\times m$ represent using $m$ inducing points for each $t \in \{0, \dots, T-1\}$, and assume the mini-batch size $B = \bigO(m)$. 

\paragraph{Time Complexity.} 
The computational cost boils down to the cost of 
computing $k_t(\bZ_t, \bZ_t)$ and the cost of variational inference. 
To compute $k_t(\bZ_t, \bZ_t)$, we need the kernel values $k(\bZ_t, G^s(\bZ_t))$ for $s=0,...,\tilde{T}-1$.
If we assume the cost of applying $G$ is $c_G$, then computing $\bK_{\bu, \bu}$ requires $\bigO(Tm \times \tilde{T} c_G + Tm^2 \times \tilde{T})$ operations. 
Variational inference costs $\bigO(T m^3)$ time. 
Therefore, the overall complexity is
$\bigO(T m^3  + 2^J T^2 m^2 + 2^J T^2 m c_G)$.
We note $2^J \leq T$, and for a single-way transformation, the cost simplifies to $\bigO(T m^3  + T^2 m^2 + T^2 m c_G)$.
In contrast, a SVGP with $Tm$ inducing points has the time complexity $\bigO(T^3m^3)$. 
Furthermore, HVGPs support straightforward parallelisms
by locating computations of $k_t$ 
on separate devices. 

\paragraph{Space Complexity.} For computing $k_t(\bZ_t, \bZ_t)$, we need the kernel values $k(\bZ_t, G^s(\bZ_t))$ for $s=0,\dots,\tilde{T}-1$, which implies the memory cost $\bigO(T m^2 \times \tilde{T} )$.
Adding the $\bigO(T m^2)$ memory for keeping variational approximations, the overall space complexity is, $\bigO(2^J T^2m^2)$.

\section{Related Works}


The idea of applying Fourier analysis to kernels goes back at least to \citet{bochner1959lectures}.
In machine learning, this led to a flowering of large-scale kernel methods based on random features~\citep{rahimi2008random,yu2016orthogonal,dao2017gaussian}.
Bochner's theorem also allows designing stationary kernels by modeling a spectral density~\citep{wilson2013gaussian,samo2015generalized,parra2017spectral,benton2019function}.
On hyperspheres, zonal kernels are the counterpart of stationary kernels.  
Their spectral decomposition is given by spherical harmonics~\citep{thomson1888treatise, morimoto1998analytic}.
Although closely related, none of these works have considered the discrete Fourier transform adopted in our method.


HVGPs share many similarities with the works that propose decoupled~\citep{cheng2017variational, salimbeni2018orthogonally} and orthogonal~\citep{shi2020sparse} inducing points.
In particular, \citet{shi2020sparse} is also based on an orthogonal decomposition of the kernel and uses distinct groups of inducing points for them.
However, their decomposition 
involves matrix inversion while ours can be computed using fast Fourier transforms.
 

Structured Kernel Interpolation (SKI) \citep{wilson2015kernel,wilson2015thoughts, evans2018scalable, izmailov2018scalable} places inducing points on a grid, leading to a structured $\bK_{\bu\bu}$ that allows fast matrix-vector multiplications.
For one-dimensional data, SKI exploits the Toeplitz structure of $\bK_{\bu\bu}$ generated by stationary kernels.
They first embed the Toeplitz matrix into a circulant matrix $\bC$, and use the fact that circulant matrices can be diagonalized by the DFT~\citep{tee2007eigenvectors} to enable fast computations:
\begin{align}
	\bC = \bF^{-1} \diag(\bF \bc) \bF.
\end{align}
Here $\bF$ is the DFT matrix, and $\bc$ is the first column of $\bC$.
This equation highlights a connection with our HKD:
If we let $\bC = [k(G^{t_1}(\bx'), G^{t_2}(\bx))]_{t_1, t_2=0}^{T-1}$, then $\bc = [k(\bx, G^0(\bx')), ..., k(\bx, G^{T-1}(\bx'))]$ is the sequence we constructed in Eq.~\eqref{eq:dft-seq}, 
and the eigenvalues $\bF \bc$ 
recover our decomposition $[k_t(\bx, \bx')]_{t=0}^{T-1}$ by the discrete Fourier transform.
In other words, our approach generalizes the structure of one-dimensional equally-spaced grids in SKI into arbitrary cyclic groups.
Moreover, our method allows trainable inducing locations, which plays an important role in combating the curse of dimensionality.

Besides inducing points in the data space, a number of works have investigated inducing features in the frequency domain.
However, these inducing features are either limited to specific kernels \citep{lazaro2009inter,hensman2017variational} or involve numerical approximations \citep{dutordoir2020sparse,burt2020variational}. 
The implementation of \citet{dutordoir2020sparse} only supports data up to 8 dimensions.


Incorporating invariances with respect to input-space transformations into Gaussian processes is also investigated in a stream of works \citep{ginsbourger2016degeneracy, van2019learning}. 
Our work is orthogonal to them since we are not designing invariant models. 
Instead, we proposed a general inference method for GPs that can benefit from invariances in the data distribution.
Relatedly, \citet{solin2020hilbert, borovitskiy2020} studied Gaussian processes on Riemannian manifolds. 

\begin{figure*}[t]
    \centering
    \includegraphics[width=\textwidth]{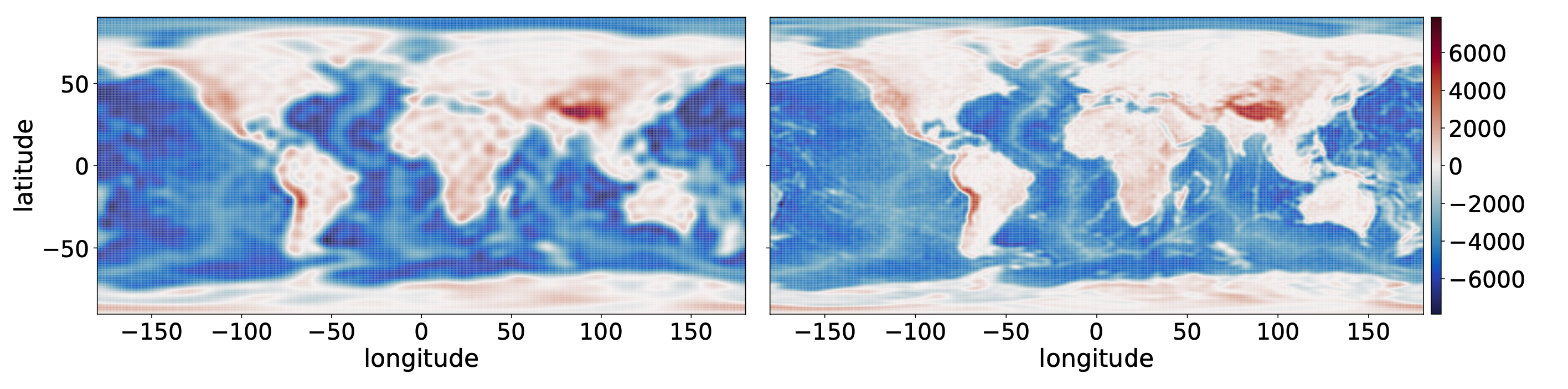}
    \caption{Predictive means of Earth elevations. We compare \textit{left}: SVGP (M=1000) and \textit{right}: HVGP ($13 \times 1000$). The transformation in the HVGP moves each point eastwards by $15^{\text{o}}$ longitude. We observe that the HVGP ($13 \times 1000$) fits the data more finely. \label{fig:elevation}}
    \vspace{-0.2cm}
\end{figure*}

\section{Experiments}
We present empirical evaluations in this section. All results were obtained using NVIDIA Tesla P100 GPUs, except in Sec~\ref{subsec:reg-ben} we used NVIDIA Tesla T4.
Code is available at \url{https://github.com/ssydasheng/Harmonic-Kernel-Decomposition}.

\subsection{Earth Elevation}

We adopt GPs to fit the ETOPO1 elevation data of the earth \citep{amante2009etopo1}. ETOPO1 bedrock models the Earth's elevations from the bedrock surface underneath the ice sheets. A location is represented by the (longitude, latitude) pair, where longitude $\in [-180, 180]$ and latitude $\in [-90, 90]$. We build the dataset by choosing a location every $0.1$ degrees of longitude and latitude, resulting in $6,480,000$ data points. The dataset is randomly split for $72\%$ training,  $8\%$ validating, and $20\%$ testing. We use a three dimensional RBF kernel between the Euclidean coordinates of any two (longitude, latitude) locations. Because moving two locations eastwards by the same amount of longitudes preserves their Euclidean distance, the kernel is invariant to the $T$-cyclic longitude translation: 
\begin{align}
G([\mathrm{lon}, \mathrm{lat}]^{\top}) = [\mathrm{lon} + \frac{360}{T}, \mathrm{lat}]^{\top},
\end{align}
We set the period $T=12$ and $T=24$, so that $G$ moves a point eastwards by $30$ and $15$ degrees, respectively. Then we resolve Hermitian kernels to obtain $\lfloor 12 / 2 \rfloor + 1 = 7, \lfloor 24 / 2 \rfloor + 1 = 13$ real-valued kernels following Sec~\ref{subsec:resolve-hermitian}. 

%

\begin{table}[t]
\centering
\small
\begin{tabular}{ccccc}
\toprule
 Model        & Test RMSE   & Test NLL    & Time \\
\midrule 
SKI      & \textbf{0.145} & 1.313 & 1.18h \\
\midrule
1k       & 0.252 & 0.040 & 0.38h  \\
3k       & 0.208 & -0.146 & 2.47h \\
5k       & 0.196 & -0.203 & 8.70h \\
7x1k     & 0.189 & -0.246 & 1.55h \\
13x1k    & \textbf{0.177} & \textbf{-0.314} & 4.25h  \\
\bottomrule
\end{tabular}
\caption{Test performances on Earth elevations. \label{tab:elevation}}
\end{table}

We compare SVGPs with $1k, 3k, 5k$ inducing points and the HVGPs with $7 \times 1k, 13 \times 1k$ inducing points. We parallelize HVGPs using 4 GPUs, while SVGPs use only 1 GPU since it cannot be easily parallelized. All models are optimized using the Adam optimizer with learning rate 0.01 for 100K iterations. We also compare with SKI \citep{wilson2015kernel}. SKI runs into an out-of-memory error because of the large dataset, so we train it using a random $600,000$ subset of the training data. The performances are shown in Table~\ref{fig:elevation} and the predictive means are visualized in Figure~\ref{fig:elevation}. From both the table and the figure, we observe using more inducing points in variational GP models fits the dataset substantially better. Moreover, because of the decomposed structures and the parallelisms, HVGPs use more inducing points but run faster. In comparison, SKI uses $1M$ inducing points and achieves the best RMSE, but its NLL is much worse compared to variational GPs.


\begin{figure}[t]
    \centering
    \includegraphics[width=\columnwidth]{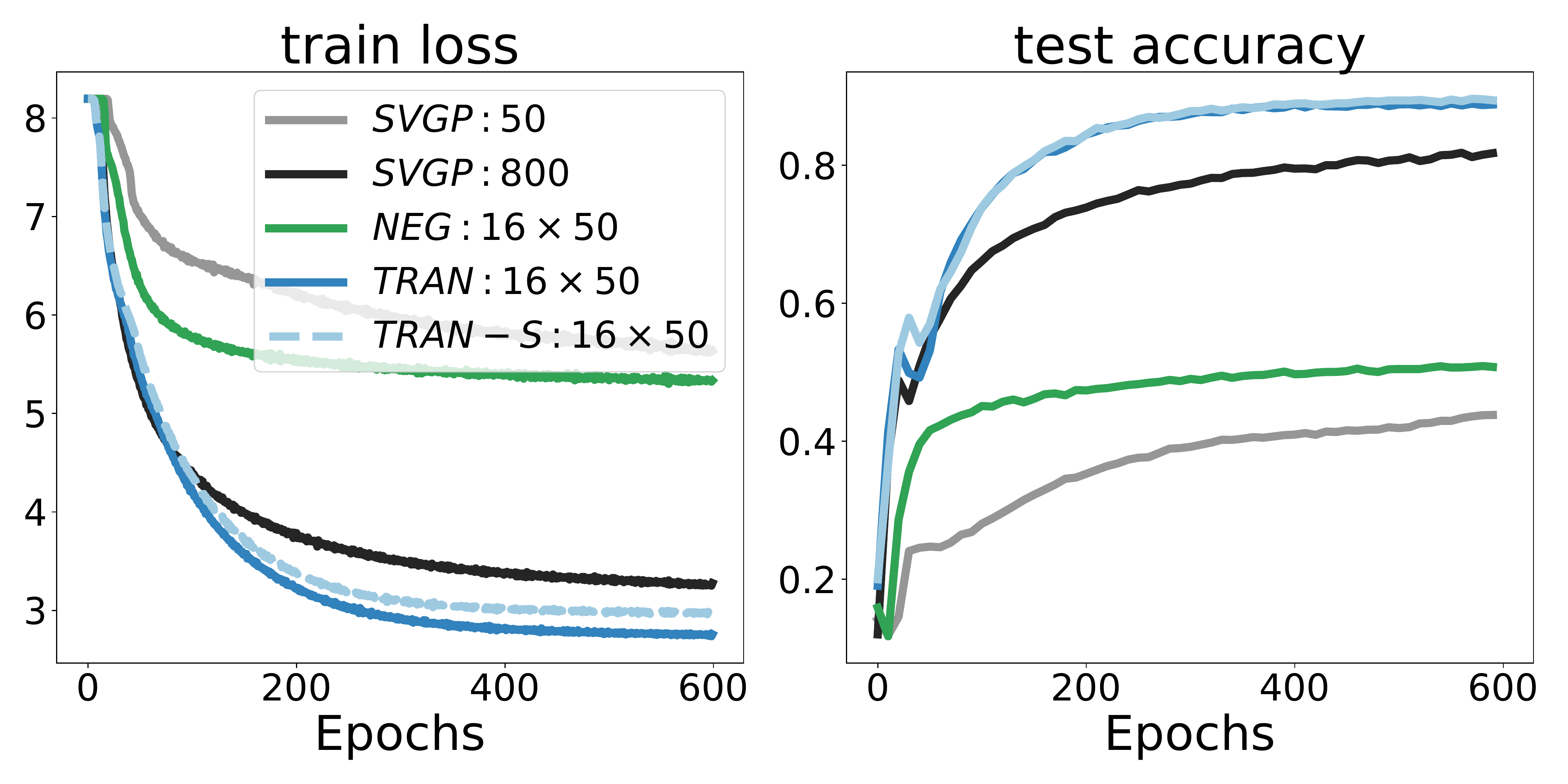}
    \caption{Translate-MNIST.  We plot how \textit{left: train loss} and \textit{right: test acc} evolve with training. We compare $16 \times M$ HVGPs with 1) \textit{NEG}: negations; 2) \textit{TRAN}: translations; 3) \textit{TRAN-S}: translations with shared inducing points. We observe the HVGPs using translations even outperform the SVGP with \textit{$800$} inducing points while the HVGP using negations performs substantially worse. Moreover, though the HVGP with \textit{TRAN-S} has only $50$ trainable inducing points, it performs similarly with the \textit{TRAN} model. \label{fig:trans-mnist}}
    \vspace{-0.2cm}
\end{figure}

\begin{figure*}[t]
\centering
\includegraphics[width=0.98\textwidth]{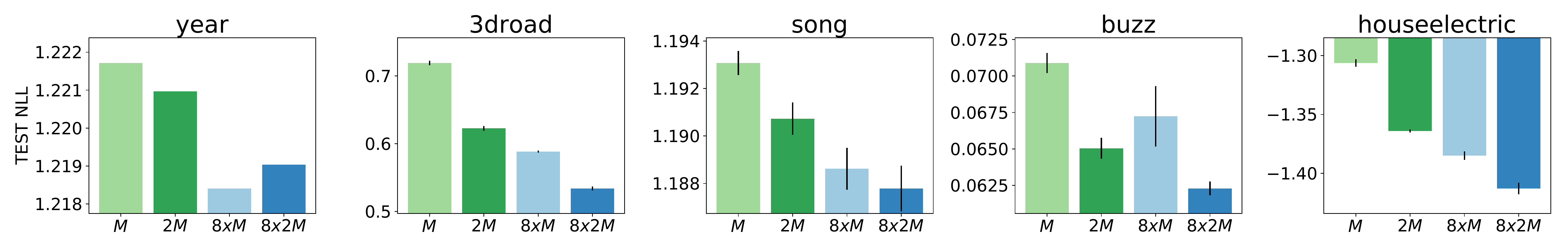}
\caption{Test negative log-likelihoods on regression benchmarks. We compare using $M, 2M, 8\times M, 8\times 2M$ for $M=1000$. We observe that the $8\times M, 8 \times 2M$ outperform the standard $M$ and $2M$ inducing points for the most datasets. \label{fig:reg}}
\end{figure*}

\begin{figure*}[t]
\centering
\includegraphics[width=0.98\textwidth]{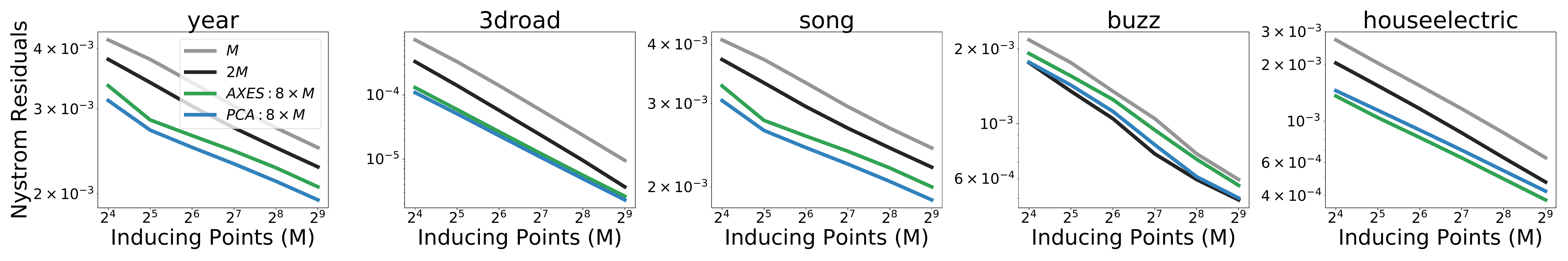}
\caption{
Nystr\"om approximation errors measured by $\tr(\bK_{\bbf\bbf} - \bK_{\bbf \bu} \bK_{\bu \bu}^{-1} \bK_{\bu \bbf})$. 
We compare SVGPs using $M, 2M$ inducing points, and HVGPs with $8 \times M$ inducing points. For the transformation in HVGPs, we include both the negation along the standard axes and along the principal components. 
We observe that negations along PCA directions usually outperform negations along standard axes, by demonstrating smaller approximation errors. And both variations of HVGPs perform better than the SVGPs. We also observe a consistency between the Nystr\"om approximation errors and the regression performances. For example, the SVGP ($2M$) performs better than the HVGP ($8\times M$) for the \textit{buzz} dataset in Figure~\ref{fig:reg}, and this is similarly reflected by the Nystr\"om approximation errors.
\label{fig:nystrom-appr}}
\end{figure*}

\subsection{Translate-MNIST}
The Elevation experiment uses one-way translations in HVGPs. In this section, we consider two-way translations for the \textit{Translate-MNIST} dataset. The dataset is obtained by translating every MNIST image leftwards and downwards by random numbers of pixels. We use an RBF kernel with shared lengthscales. The HVGP uses a 2-way transformation $G$ by translating the image leftwards or downwards by $4$ pixels. Since the MNIST images are of size $28\times 28$, $G$ is $(7, 7)$ cyclic. After resolving Hermitian kernels, we arrive at $(1 + \lfloor 7 /2 \rfloor) \times (1 + \lfloor 7 /2 \rfloor) = 16$ groups.

We compare the $16 \times 50$ translation HVGP with SVGPs using $50$ and $800$ inducing points. We further consider a variation of the HVGP by sharing the inducing points $\bZ$ as in Theorem~\ref{lem:q-equi}. We also include a $16 \times 50$ HVGP with 4-way negations whose transformation does not match the input-space distribution. We optimize all models using the Adam optimizer with learning rate 0.001 for 100K iterations. The results are shown in Figure~\ref{fig:trans-mnist}.

\subsection{Regression Benchmarks}\label{subsec:reg-ben}

We also evaluate our method on standard regression benchmarks, whose training data sizes range from $200$ thousand to $1$ million. Following \citet{wang2019exact}, we use the Mat\'ern 3/2 kernel with shared lengthscales. We consider the $J$-way composition of negations. 
Specifically, we conduct negations over PCA directions. We split the PCA directions into $J$ subsets, and applying negations over these subsets results in a $2^J \times m$ model.  A visual comparison between the negation along axes and the negation along principal directions is shown in Figure~\ref{fig:pca}.



\begin{figure}[h]
\centering
\includegraphics[width=\columnwidth]{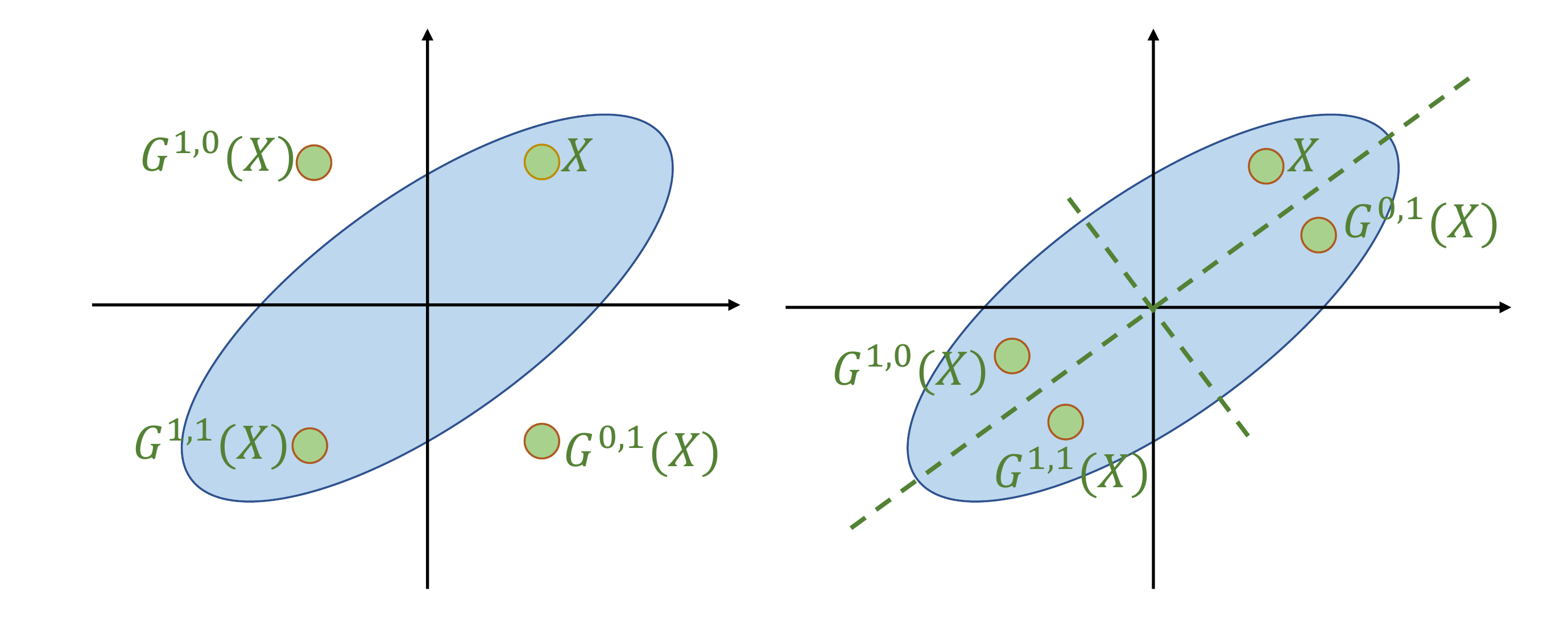}
\caption{Negation along axes (\textit{left}) and along principled directions (\textit{right}). The shaded area represents the input distribution. We observe that $G^{1,0}(\bx), G^{0,1}(\bx)$ are out of the data distribution when transforming along axes. In comparison, when transforming along PCA directions, the whole orbit is in-distribution.\label{fig:pca}}
\end{figure}

We compare SVGPs using $M$ and $2M$ inducing points with HVGPs using $8\times M$ and $8 \times 2M$ inducing points for $M=1000$. For HVGPs, we use $3$-way negations over PCA directions, and we use $8$ GPUs to place the computations of each GP in parallel.  The results for negative log likelihoods (NLLs) are reported in Figure~\ref{fig:reg}. We also report the root mean squared error (RMSE) performances in Figure~\ref{app:fig:reg-rmse} in the Appendix.

\begin{figure}[t]
\centering
\includegraphics[width=\columnwidth]{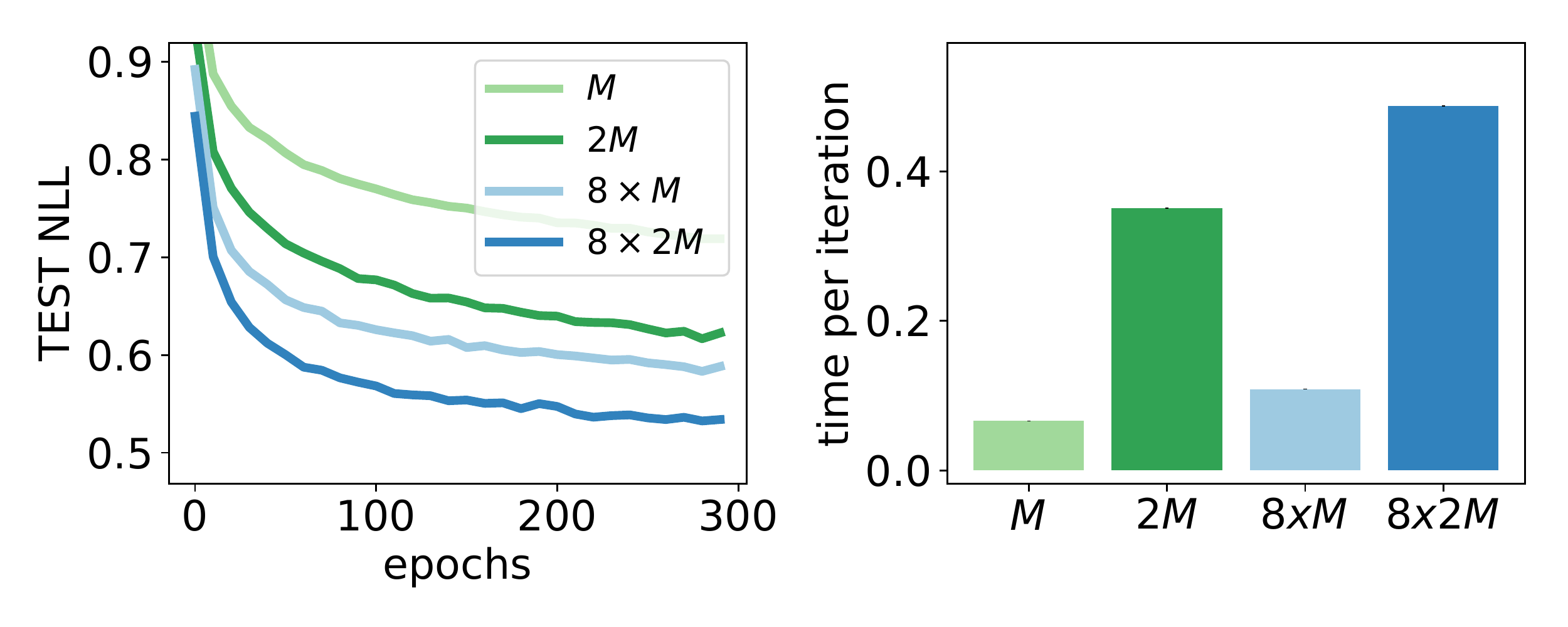}
\caption{Test negative log-likelihoods during training and the training time per iteration for the \textit{3droad} dataset.\label{fig:reg-training}}
\end{figure}

In Figure~\ref{fig:reg-training} we plot the evolution of the test negative log-likelihoods during training, and the training time per iteration, for the \textit{3droad} dataset. In Figure~\ref{fig:reg-training}, we observe that using more inducing points enables learning the dataset more quickly and converging to a better minima. Furthermore, due to the benefit of parallelism, the $8\times M$ HVGP has comparable running time compared to the standard SVGP with $M$ inducing points. And it is much faster compared to the SVGP with $2M$ inducing points in spite of the improved performance. Moreover, the computational bottleneck of SVGPs lies in the Cholesky decomposition, which does not support easy parallelism.

The performance of variational GPs relies largely on how well can the inducing points summarize the dataset, which can be measured by the accuracy of Nystr\"om approximation $\bK_{\bbf \bbf} \approx \bK_{\bbf \bu} \bK_{\bu \bu}^{-1} \bK_{\bu \bbf}$ \citep{drineas2005nystrom, titsias2009variational, burt2019rates}. 
We compare the Nystr\"om approximation errors with all methods, using the trace norm defined as $\tr(\bK_{\bbf \bbf} - \bK_{\bbf \bu} \bK_{\bu \bu}^{-1} \bK_{\bu \bbf})$. 
For HVGPs, 
the trace norm is computed as
\begin{align}
    \sum_{t=0}^{T-1}\tr(\bK_{t, \bbf \bbf} - \bK_{t, \bbf \bu} \bK_{t, \bu \bu}^{-1} \bK_{t, \bu \bbf}).
\end{align}
where we use $\bK_{t, \cdot}$ to represent the kernel $k_t$. For each dataset, we randomly sample $3000$ points as $\bX$ and use a Mat\'ern 3/2 kernel whose lengthscales are set based on the median 
heuristic. We initialize the inducing points $\bZ$ using K-means and optimize them to minimizing the trace error. We compare SVGPs, HVGPs with negations along axes, and HVGPs with negations along principled directions. The results are shown in Figure~\ref{fig:nystrom-appr}.

\subsection{CIFAR-10 Classification}
In this subsection we  conduct experiments on the CIFAR10 classification problem using deep convolutional Gaussian processes \citep{blomqvist2019deep, dutordoir2019translation}, which combine the deep GP with the convolutional inducing features \citep{van2017convolutional}. Following the settings in \citet{shi2020sparse}, we compare HVGP with SVGP on both one-layer and multi-layer convolutional GPs.

For HVGPs we use the negation transformations on the inducing filters $G(\bz)=-\bz$. We compare HVGPs using \textcolor[HTML]{3182bd}{2xM}, \textcolor[HTML]{3182bd}{4xM} inducing points with SVGPs using \textcolor[HTML]{31a354}{M}, \textcolor[HTML]{31a354}{2M} inducing points. For the HVGP (\textcolor[HTML]{3182bd}{4xM}), we use $4$ GPUs to achieve parallelism. We also compare with the 2-way decomposed model in \citet{shi2020sparse} termed as \textcolor[HTML]{d95f0e}{M+M}. The results are summarized in Table~\ref{tab:cifar}. We observe that using more inducing filters results in better performances. In particular, the HVGP (\textcolor[HTML]{3182bd}{4xM}) achieves the best NLLs. Because of the parallelism, the HVGP (\textcolor[HTML]{3182bd}{4xM}) also has comparable running time with the HVGP (\textcolor[HTML]{3182bd}{2xM}), and both are faster than \textcolor[HTML]{31a354}{2M} and \textcolor[HTML]{d95f0e}{M+M} for deep models.

\begin{table}[t] \vskip -0.05in
\resizebox{\columnwidth}{!}{%
\begin{tabular}{ccccc}
\toprule
M                          & Model        & ACC   & NLL    & sec/iter \\
\midrule  
\multirow{5}{*}{384x0, 1K}      & \textcolor[HTML]{31a354}{M}   & 65.70$\pm$0.06 &  1.65$\pm$0.00 & 0.21  \\
                                & \textcolor[HTML]{31a354}{2M}   & \textbf{67.84$\pm$0.07} &  1.52$\pm$0.00 & 0.39  \\
                                & \textcolor[HTML]{d95f0e}{M+M}  & 67.67$\pm$ 0.07 & \textbf{1.50 $\pm$0.01} & 0.39 \\
                                & \textcolor[HTML]{3182bd}{2xM} & 66.26$\pm$1.11 &  1.76$\pm$0.17 & 0.45  \\
                                & \textcolor[HTML]{3182bd}{4xM}  & 67.76$\pm$0.05 &  \textbf{1.51$\pm$0.01} & 0.52  \\
\midrule 
\multirow{5}{*}{384x1, 1K}      & \textcolor[HTML]{31a354}{M}   & 76.40$\pm$0.02 &  1.03$\pm$0.00 & 0.16  \\
                                &\textcolor[HTML]{31a354}{2M}   & 77.11$\pm$0.10 &  1.00$\pm$0.00 & 0.47  \\
                                & \textcolor[HTML]{d95f0e}{M+M}  & \textbf{77.48$\pm$0.10} & 0.98$\pm$0.01 & 0.41 \\
                                & \textcolor[HTML]{3182bd}{2xM}  & 77.09$\pm$0.18 &  1.00$\pm$0.00 & 0.37  \\
                                & \textcolor[HTML]{3182bd}{4xM}  & 77.30$\pm$0.17 &  \textbf{0.95$\pm$0.00} & 0.36  \\
\midrule 
\multirow{5}{*}{384x2, 1K}      & \textcolor[HTML]{31a354}{M}    & 79.01$\pm$0.11 &  0.86$\pm$0.00 & 0.17  \\
                                &\textcolor[HTML]{31a354}{2M}   & 80.27$\pm$0.04 &  0.81$\pm$0.00 & 0.52  \\
                                & \textcolor[HTML]{d95f0e}{M+M}  & 79.98 $\pm$0.21 & 0.80$\pm$0.01  & 0.46 \\
                                & \textcolor[HTML]{3182bd}{2xM}   & 80.04$\pm$0.04 &  0.80$\pm$0.00 & 0.37  \\
                                & \textcolor[HTML]{3182bd}{4xM}   & \textbf{80.52$\pm$0.20} &  \textbf{0.75$\pm$0.01} & 0.37  \\
\midrule 
\multirow{5}{*}{384x3, 1K}      & \textcolor[HTML]{31a354}{M}  & 82.41$\pm$0.08 &  0.73$\pm$0.01 & 0.40  \\
                                &\textcolor[HTML]{31a354}{2M}   & - & - & -  \\
                                & \textcolor[HTML]{d95f0e}{M+M}  & 83.26$\pm$0.19 & 0.69$\pm$0.01 & 1.24 \\
                                & \textcolor[HTML]{3182bd}{2xM}   & \textbf{84.97$\pm$0.08} &  0.60$\pm$0.00 & 0.90 \\
                                &  \textcolor[HTML]{3182bd}{4xM}  & \textbf{84.85$\pm$0.11} &  \textbf{0.58$\pm$0.00} & 0.90  \\
\bottomrule
\end{tabular}}
\caption{Deep Convolutional GPs for CIFAR-10 classification. Previous SOTA \citep{shi2020sparse} achieves ACC=$80.33$, NLL=0.82, and $1.25$ sec/iter.
We use $384\text{x}\ell, 1K$ to represent a $(\ell+1)$-layer model with a respective number of inducing points in each layer. 
We compare \textcolor[HTML]{31a354}{M}, \textcolor[HTML]{31a354}{2M}, \textcolor[HTML]{d95f0e}{M+M}, \textcolor[HTML]{3182bd}{2xM},  \textcolor[HTML]{3182bd}{4xM}. We used $4$ GPUs for the \textcolor[HTML]{3182bd}{4xM} model to achieve parallelism. For the four-layer model, using \textcolor[HTML]{31a354}{2M} inducing points did not fit in memory. Instead we used a model with~$(700\text{x}3, 1600)$ inducing points and achieved ACC=$82.89\pm 0.05$, NLL=$0.73 \pm 0.00$, and $1.10$ sec/iter. \label{tab:cifar}}
\end{table} 

\section{Conclusion}
We presented the harmonic kernel decomposition which exploited input-space symmetries to obtain an orthogonal kernel sum decomposition, based on which we introduced a scalable variational GP model and analyzed how well the model approximates the true posterior of the GP. We validated its superior performances in terms of scalability and accuracy through a range of empirical evaluations.


\section*{Acknowledgements}
We thank Wesley Maddox, Greg Benton, Sanyam Kapoor, Michalis Titsias, Radford Neal, and anonymous reviewers for their insightful comments and discussions on this project. We also thank the Vector Institute for providing the scientific computing resources. SS was supported by the Connaught Fellowship. RG acknowledges support from the CIFAR Canadian AI Chairs program.

\bibliography{bib/gp,bib/bnn,bib/gan,bib/vi,bib/misc,bib/bayesian,bib/nn,bib/generalization,bib/rl,bib/logic,bib/cv,bib/nlp,bib/matrix}
\bibliographystyle{icml2021}


\clearpage
\onecolumn
\appendix

\section{Kernel Fourier Transform}
We have shown that the Fourier series of length $T$ form the harmonic kernel decomposition with $T$ kernels. Intuitively, if $T \to \infty$, we obtain a “continuous” frequency representation of the kernel, which would be akin to a Fourier transform. 

Consider the transformation $G^\bs: \xdomain \to \xdomain, \bs \in \Reals^J$, corresponding to $J$-way transformations. We assume the transformation $G$ is $\bone$-periodic: $G^\bzero(\bx)=G^\bone(\bx)=\bx, G^{\bs_1+\bs_2}(\bx)=G^{\bs_1}(G^{\bs_2}(\bx)), \forall \bs_1, \bs_2 \in \Reals^J$. A kernel is $G$-invariant if for any $\bs \in \Reals^J$, $k(G^\bs(\bx), G^\bs(\bx')) = k(\bx, \bx')$. 

Given the inputs $\bx, \bx'$, we consider the space of kernel values: $k(\bx, G^{\bs}(\bx')), \bs \in \Reals^J$. For $\bt \in \Reals^J$, we define the complex-valued function $k_{\bt}: \xdomain \times \xdomain \to \Complex$ using the Fourier transform,
\begin{align}
    k_{\bt}(\bx, \bx') = \int_{\Reals^J} e^{-2\pi i \bs^{\top}\bt} k(\bx, G^\bs(\bx')) \mathrm{d}\bs,
\end{align}
In this way, $k_\bt(\bx, \bx')$ captures the frequency of $\bt$  in the function $\bs \to k(\bx, G^\bs(\bx'))$. Similar to the harmonic kernel decomposition, we show an alternative representation of the kernel using $k_t$.
\begin{theorem}[Harmonic Kernel Representation]\label{thm:kern-ft}
\begin{align}
    k(\bx, \bx') = \int_{\Reals^J} k_\bt(\bx, \bx') \mathrm{d} \bt.
\end{align}
Moreover, $k_\bt$ is a kernel for all $\bt \in \Reals^J$.
\end{theorem}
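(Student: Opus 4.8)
The plan is to mirror the discrete argument from \Cref{thm:kern-decomp}, replacing the DFT/IDFT pair with the Fourier transform/inverse Fourier transform pair on $\Reals^J$. First I would establish the inversion formula. Writing $\phi_{\bx,\bx'}(\bs) := k(\bx, G^\bs(\bx'))$, the definition says $k_\bt(\bx,\bx') = \widehat{\phi_{\bx,\bx'}}(\bt)$ (up to the usual sign convention). So $\int_{\Reals^J} k_\bt(\bx,\bx')\,\mathrm d\bt = \int_{\Reals^J} \widehat{\phi_{\bx,\bx'}}(\bt)\,\mathrm d\bt = \phi_{\bx,\bx'}(\bzero)$ by Fourier inversion, and $\phi_{\bx,\bx'}(\bzero) = k(\bx, G^\bzero(\bx')) = k(\bx,\bx')$, which is exactly the claimed representation. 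This step needs integrability/regularity hypotheses on $\bs \mapsto k(\bx, G^\bs(\bx'))$ so that Fourier inversion is valid pointwise (e.g.\ that it lies in $L^1$ together with its transform, or one works in the tempered-distribution sense); I would state the hypothesis that makes this clean, noting that when $G$ is $\bone$-periodic the relevant object is really a function on the torus $\Reals^J/\Ints^J$ and the "integral over $\Reals^J$" should be read as a Fourier series sum over $\Ints^J$ — the $\bone$-periodicity assumption stated just before the theorem is what lets us treat $\bs$ as a torus variable, so the inversion is the Fourier series inversion and integrability is automatic for continuous $k$.

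Second, for positive definiteness of $k_\bt$, I would reproduce the quadratic-form identity used in \Cref{thm:kern-decomp}: there one shows $k_t(\bx,\bx') = \bF_{:,t}^H \bK(\bx,\bx') \bF_{:,t}$ with $\bK(\bx,\bx') = [k(G^{s_1}(\bx), G^{s_2}(\bx'))]_{s_1,s_2}$, the key input being $G$-invariance, which gives $k(\bx, G^{s}(\bx')) = k(G^{s_1}(\bx), G^{s_1+s}(\bx'))$ so the sequence is a "diagonal slice" of the doubly-indexed kernel matrix. In the continuous setting the analogue is: by $G$-invariance, $k(\bx, G^{\bt_2 - \bt_1}(\bx')) = k(G^{\bt_1}(\bx), G^{\bt_2}(\bx'))$, hence
\begin{align}
    k_\bt(\bx,\bx') = \int_{\Reals^J} e^{-2\pi i \bs^\top \bt} k(\bx, G^\bs(\bx'))\,\mathrm d\bs = \int_{\Reals^J} e^{-2\pi i (\bt_2-\bt_1)^\top \bt} k(G^{\bt_1}(\bx), G^{\bt_2}(\bx'))\,\mathrm d(\bt_2-\bt_1), \notag
\end{align}
which, after the substitution making the integrand depend on $\bt_1$ and $\bt_2$ separately, exhibits $k_\bt$ as $\langle \psi_\bt(\bx), \psi_\bt(\bx')\rangle$ for the feature map $\psi_\bt(\bx) : \bt_1 \mapsto e^{2\pi i \bt_1^\top \bt}\, (\text{restriction of } k \text{ to the } G\text{-orbit of } \bx)$. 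Concretely I would pick a finite set $\bx_1,\dots,\bx_n$ and complex scalars $c_1,\dots,c_n$ and show $\sum_{i,j} \bar c_i c_j k_\bt(\bx_i,\bx_j) = \int\int \overline{a(\bt_1)} M(\bt_1,\bt_2) a(\bt_2)$ is nonnegative, where $M(\bt_1,\bt_2) = \sum_{i,j}\bar c_i c_j k(G^{\bt_1}(\bx_i), G^{\bt_2}(\bx_j))$ is itself PSD as a kernel matrix built from $k$ evaluated on transformed points, and $a(\bt_1)$ is the appropriate exponential weight — so the whole expression is a (limit of) PSD quadratic forms. Hermitian symmetry $k_\bt(\bx',\bx) = \overline{k_\bt(\bx,\bx')}$ follows from the same identity plus Hermitian symmetry of $k$.

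The main obstacle is analytic rather than algebraic: making the formal manipulations rigorous when $\Reals^J$ is noncompact. If one literally integrates over $\Reals^J$, then $\bs \mapsto k(\bx, G^\bs(\bx'))$ is typically not integrable (for a $\bone$-periodic $G$ it is genuinely periodic, hence constant-order, not decaying), so the "$\int_{\Reals^J} \cdots \mathrm d\bt$" must be interpreted distributionally — $k_\bt$ will be a sum of Dirac masses on the lattice $\Ints^J$ — or one passes to the torus picture where everything is a bona fide Fourier series. I would handle this by explicitly stating that the $\bone$-periodicity hypothesis means the natural domain is $\Torus^J = \Reals^J/\Ints^J$, reducing \Cref{thm:kern-ft} to exactly the periodic analogue of \Cref{thm:kern-decomp} (the Fourier series replacing the finite DFT), and then the integrability questions evaporate: for continuous $G$-invariant $k$ the slice function is continuous on the torus and Fourier inversion of its series holds in the $L^2$ / Abel-summed sense, with the PSD argument going through via the same finite-rank-approximation-then-limit scheme used in the discrete proof. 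The honest alternative — genuinely non-periodic $G$ on $\Reals^J$ with a decaying slice function — requires a Bochner-type regularity assumption on $k$ along orbits, which I would flag as the precise hypothesis under which the noncompact statement holds verbatim.
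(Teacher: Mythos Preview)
Your approach matches the paper's: both use Fourier inversion (via the identity $\int_{\Reals^J} e^{-2\pi i \bs^\top \bt}\,\mathrm d\bt = \delta(\bs)$) for the representation formula, and both establish that $k_\bt$ is a kernel by rewriting it as the continuous analogue of $\bF_{:,t}^H \bK \bF_{:,t}$, namely the double integral $\iint_{\Reals^J\times\Reals^J} e^{-2\pi i \bt^\top(\bs_2-\bs_1)} k(G^{\bs_1}(\bx),G^{\bs_2}(\bx'))\,\mathrm d\bs_1\,\mathrm d\bs_2$, using $G$-invariance exactly as you describe. Your treatment of the analytic issues (periodicity forcing a torus/distributional reading) is actually more careful than the paper's, which handles all integrals purely formally --- for instance, it silently drops a divergent $\int_{\Reals^J}\mathrm d\bs_1$ factor in the kernel argument.
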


\begin{proof}[\textbf{Proof of Theorem~\ref{thm:kern-ft}}]
We prove this theorem by the following derivation,
\begin{align}
    \int_{\Reals^J} k_{\bt}(\bx, \bx') \mathrm{d}\bt     &=  \int_{\Reals^J} \int_{\Reals^J} e^{-2\pi i \bs^{\top}\bt} k(\bx, G^\bs(\bx')) \mathrm{d}\bs \mathrm{d}\bt 
    = \int_{\Reals^J}  k(\bx, G^\bs(\bx'))  \int_{\Reals^J}e^{-2\pi i \bs^{\top}\bt} \mathrm{d}\bt \mathrm{d}\bs \notag \\
    &= \int_{\Reals^J}  k(\bx, G^\bs(\bx'))  \delta_{\bs} \mathrm{d}\bs 
    = k(\bx, \bx'). \notag
\end{align}
where we used the property that the Fourier transform of the constant function is the delta function.

To show that $k_t$ is a kernel, we prove the following equality,
\begin{align}
    \int_{\Reals^J}  \int_{\Reals^J} e^{-2\pi i \bt^{\top}(\bs_2 - \bs_1)} k(G^{\bs_1}(\bx), G^{\bs_2}(\bx')) \mathrm{d}\bs_1 \mathrm{d}\bs_2
    &=\int_{\Reals^J}  \int_{\Reals^J} e^{-2\pi i \bt^{\top}\bs_2} k(\bx, G^{\bs_2}(\bx')) \mathrm{d}\bs_1 \mathrm{d}\bs_2 \notag \\
    &= \int_{\Reals^J} e^{-2\pi i \bt^{\top}\bs_2} k(\bx, G^{\bs_2}(\bx')) \mathrm{d}\bs_2  
    = k_\bt(\bx, \bx'). \notag
\end{align}
\end{proof}

We demonstrate the Kernel Fourier Transform by considering a stationary kernel on the unit circle. We denote the input $x$ as the angle, then the kernel admits the form $k(x, x') = \kappa(x - x')$, where $\kappa$ is a periodic function of period $2\pi$. Let $\kappa_0(t) = \kappa(t) \mathbb{I}[0 \leq t < 2\pi]$, then 
\begin{align}
    k(x, x') = \kappa(x -x') = \sum_{n \in \Ints} \kappa_0(x - x' - 2 \pi n), \notag 
\end{align}
Let $G^{s}(x) = x + 2\pi s$, we obtain, 
\begin{align}
    k_t(x, x') 
    &= \sum_{n \in \Ints}  \int_{\Reals} e^{-2\pi i s t} \kappa_0(x - x' - 2\pi s - 2 \pi n) \mathrm{d}s 
    = \frac{1}{2\pi}\sum_{n \in \Ints}  \int_{\Reals} e^{-i (x-x'-2\pi n - w) t} \kappa_0(w) \mathrm{d}w \notag \\
    &= e^{-it(x - x')}  \left[\frac{1}{2\pi} \int_{\Reals} e^{i w t} \kappa_0(w)  \mathrm{d}w \right] \sum_{n \in \Ints} e^{2\pi i nt } 
    = e^{-it(x - x')} \hat{\kappa}_0(t) \sum_{n \in \Ints} \delta(t - n). \notag 
\end{align}
where $\hat{\kappa}_0$ is the inverse Fourier transform of $\kappa_0$. Then we have the Fourier series,
\begin{align}
    k(x, x') = \int_\Reals k_t(x, x') dt = \sum_{n \in \Ints} \hat{\kappa}_0(n) e^{-i n(x - x')}. \notag 
\end{align}

\section{Inter-domain Inducing Points Formulation}

We present an inter-domain inducing points interpretation of the harmonic kernel decomposition. An inter-domain inducing point is a function $w: \xdomain \to \Complex$ whose inducing variable is defined as,
\begin{align}
    u_w = \int f(\bx) w(\bx) d\bx,
\end{align}
We introduce $T$ kinds of inter-domain inducing points. For $t=0, ..., T-1$, given $\bz_t \in \xdomain$, the inter-domain inducing point of the $t$-th kind is,
\begin{align}
    w_t &= \sum_{s=0}^{T-1} \bF^H_{t, s} \delta_{G^s(\bz_t)}, \\
    u_{w_t} &= \int f(\bx) w_t(\bx) d\bx = \sum_{s=0}^{T-1} \bF^H_{t, s} f(G^s(\bz_t)),
\end{align}
Therefore, we can generalize the kernel function to include inter-domain inputs,
\begin{align}
    k(\bx, w_t) &= \expect[f(\bx) u^H_{w_t}] = \sum_{s=0}^{T-1} \bF_{t, s} k(\bx, G^s(\bz_t)), \\
    k(w_t, w'_t) &= \expect[u_{w_t}u^H_{w'_t} ] = \sum_{s=0}^{T-1}\sum_{s'=0}^{T-1} \bF^H_{t, s} \bF_{t, s'} k(G^s(\bz_t), G^{s'}(\bz'_t))= \sum_{s=0}^{T-1} \bF_{t, s} k(\bz_t, G^s(\bz'_t)), 
\end{align}
where the last equality is based on Lemma~\ref{app:lem:kfs-1}. Furthermore, for $0 \leq t \neq t' \neq T-1$,
\begin{align}
    k(w_{t}, w_{t'}) &= \sum_{s=0}^{T-1}\sum_{s'=0}^{T-1} \bF_{t, s} \bF_{t', s'}^H k(G^s(\bz_t), G^{s'}(\bz'_{t'})) = 0. \notag 
\end{align}
where the last equality is based on Lemma~\ref{app:lem:kfs-orth}. We observe,
\begin{align}
    k(\bx, w_t) = k_t(\bx, \bz_t); \; k(w_t, w'_t) = k_t(\bz_t, \bz'_t).
\end{align}
Now we find that \emph{the proposed inter-domain inducing points formulation is equivalent to the kernel Fourier series.} Furthermore, the equivalence also reinterprets HVGPs as standard SVGPs using inter-domain inducing points while enforcing block diagonal posterior covariances.

\section{More Experiments and Details}

%

\subsection{Toy Visualization}

HVGPs are based on the decomposed GP formulation and assume independent variational posteriors. Therefore, the predictions on a target location $\bx^{\star}$ can be decomposed as the combination of independent elements,
\begin{align}
    &\normal(\bzero, \bK_{\star \star} - \sum_{t=0}^{T-1} \bK_{t, \star \bu} \bK_{t, \bu \bu}^{-1} \bK_{t, \bu \star})  ) + \sum_{t=0}^{T-1} \normal(\bK_{t, \star \bu_t} \bK_{t, \bu_t \bu_t}^{-1} \bmu_t, \bK_{t, \star \bu_t} \bK_{t, \bu_t \bu_t}^{-1}  \bS_t \bK_{t, \bu_t \bu_t}^{-1} \bK_{t, \star \bu_t} ) \notag
\end{align}
where we use $\bK_{t, \cdot}$ to represent the kernel $k_t$. The first term in the prediction represents the error of the Nystr\"om approximation, and the remaining terms contain the predictions from all subprocesses.

In this section we conduct a Snelson's 1D toy experiment to visualize the posterior predictions and each term. We set $T=2, G(x)=-x, m=5$, which results in HVGP ($2\times 5$). Because the original training inputs are positive, we preprocess it by subtracting the inputs by the mean. 
The results are shown in Figure~\ref{fig:toy}. We find that using $2 \times 5$ inducing points fit the training data well, and generate reasonable predictive uncertainty as well. The predictions for the two GPs correspond to the symmetric and the antisymmetric fraction, respectively.

\begin{figure}[t]
\centering
\includegraphics[width=0.9\textwidth]{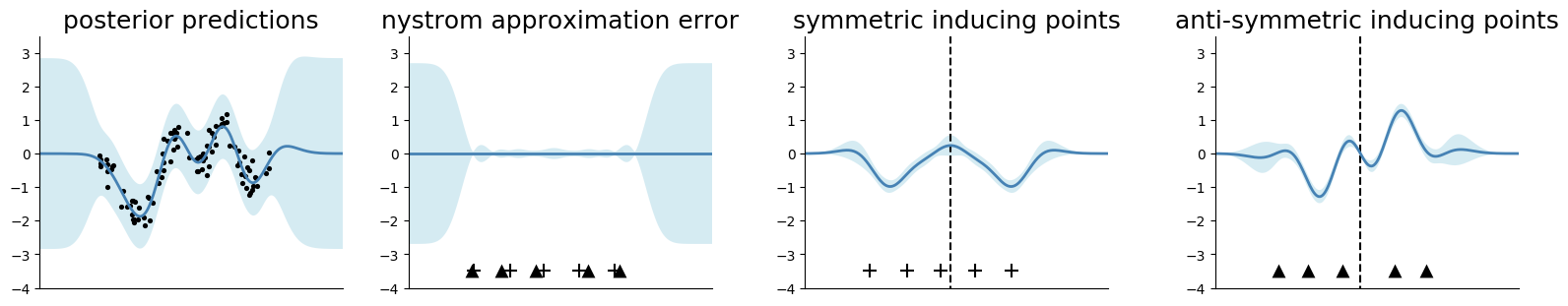}
\caption{Posterior predictions on the Snelson dataset, where shaded bands correspond to intervals of $\pm 3$  standard deviation. The leftmost \textit{posterior prediction} plot is the combination of the right three plots, plus the observation variance. We also visualize the associated inducing points for each plot. We observe that the HVGP predictions are separated as the symmetric fraction and the antisymmetric fraction.\label{fig:toy}}
\end{figure}

\subsection{Regression Benchmarks}

We use the Mat\'ern 3/2 kernel with shared lengthscales across input dimensions. 
For HVGPs, the transformation $G$ is by negating over PCA directions. We split the PCA directions into $J$ subsets, then apply negations over which results in a $2^J \times M$ model. We let the $j$-th subset contain the directions with the $j \text{th large}, (J+j) \text{th large}, ...$ eigenvalues, so that the principal subspace is covered well. Except for the \textit{year} dataset which has a standard train/test split, each dataset is randomly split into $64\%$ training, $16\%$ validating, $20\%$ testing sets and is averaged over $3$ random splits. We initialize the inducing points using K-means and initialize the kernel lengthscale using the median heuristic. The Gaussian likelihood variance is initialized at $0.1$. For all experiments, we optimize for 30k iterations with the Adam optimizer using learning rate 0.003 and batch size 256. We visualize the results for test RMSEs in Figure~\ref{app:fig:reg-rmse}, and how each criterion evolves along training in Figure~\ref{app:fig:reg-training-all}.

\begin{figure}[h]
\centering
\includegraphics[width=0.98\textwidth]{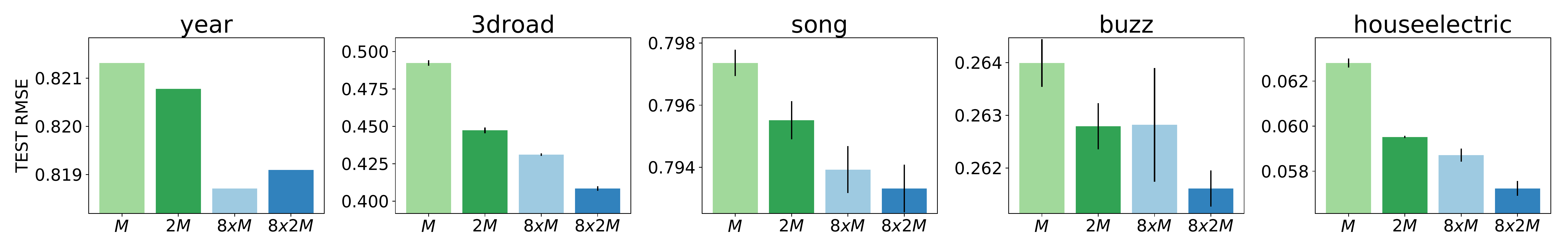}
\caption{Test RMSEs on regression benchmarks. We compare SVGPs using $M, 2M$ inducing points and HVGPs using $8\times M, 8\times 2M$ inducing points, for $M=1000$. \label{app:fig:reg-rmse}}
\end{figure}

\begin{figure*}[t]
\centering
\includegraphics[width=0.98\textwidth]{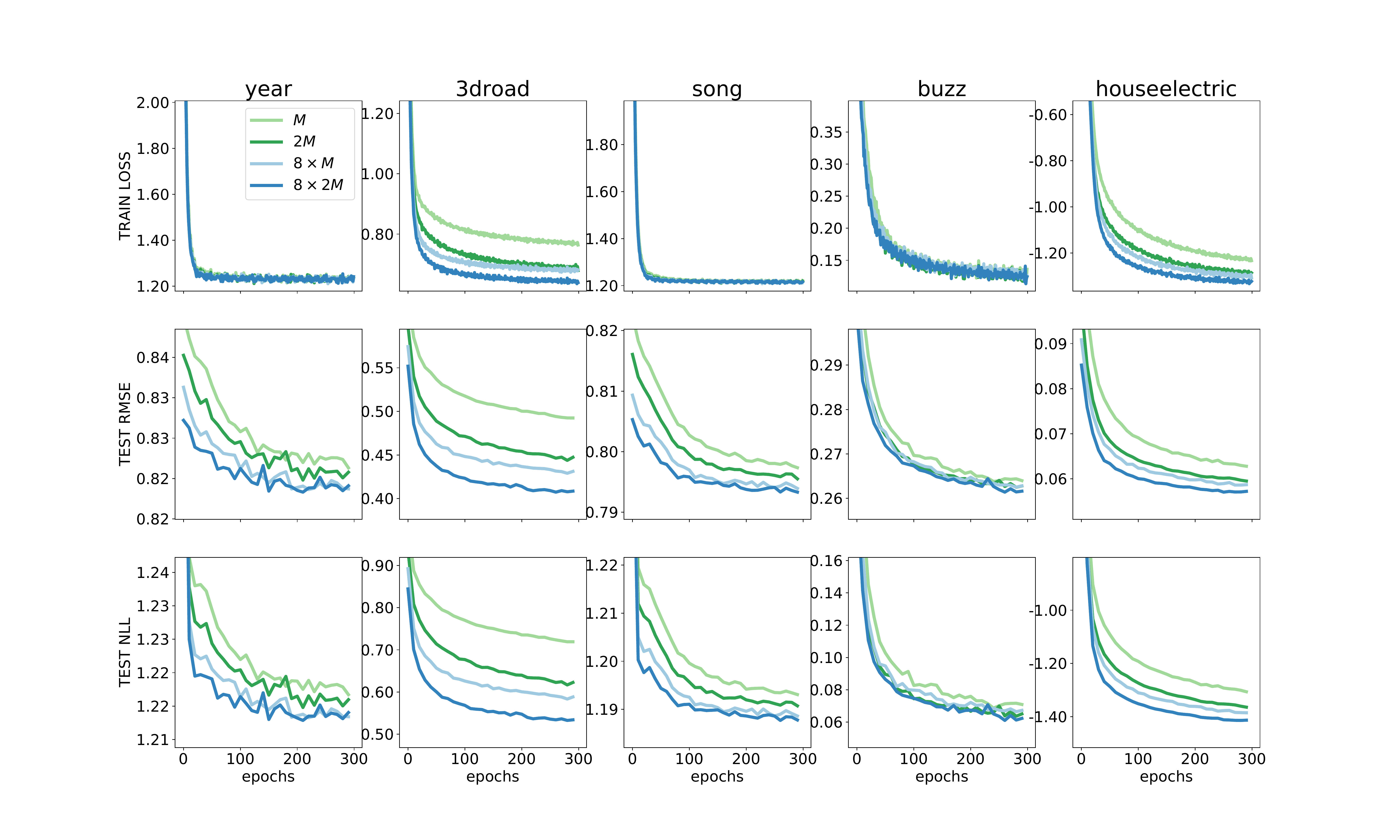}
\caption{How \textit{train loss}, \textit{test rmse}, and \textit{test nll} evolve during training. We compare SVGPs using $M, 2M$ inducing points and HVGPs using $8\times M, 8\times 2M$ inducing points, for $M=1000$.\label{app:fig:reg-training-all}}
\end{figure*}

%
%
%

\subsection{CIFAR-10 Classification}

%

\begin{minipage}{\textwidth}
  \hspace{2em}
  \begin{minipage}[b]{0.2\textwidth}
    \centering
\includegraphics[width=\textwidth]{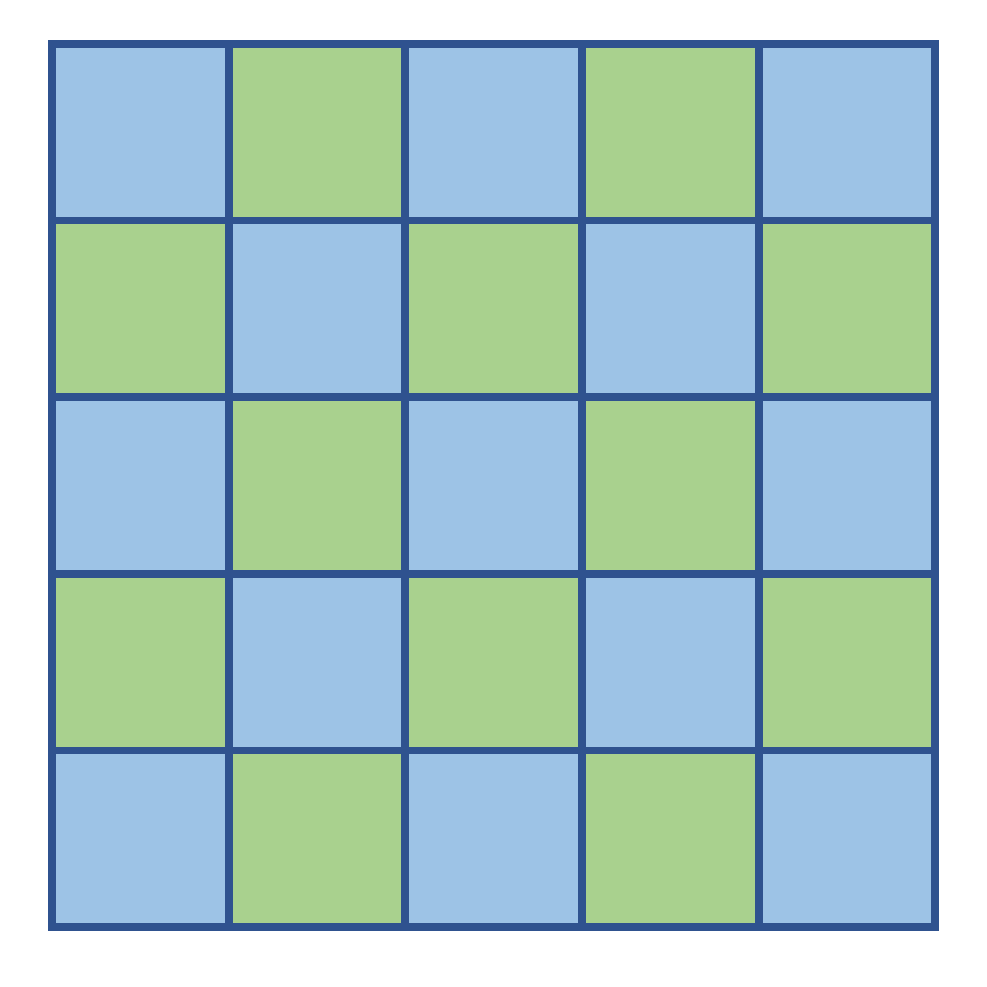}\captionof{figure}{Group Splitting for the HVGP ($4\times M$). \label{app:fig-conv-g}}
  \end{minipage}
  \hspace{-2em}
  \begin{minipage}[b]{0.8\textwidth}
    \centering
    \begin{tabular}{ccccc}
\toprule
             & 1-layer   & 2-layer   & 3-layer   & 4-layer   \\ 
\midrule
filter size  & 5         & 5, 4      & 5,4,5     & 5,4,5,4   \\ 
stride size  & 1         & 1,2       & 1,2,1     & 1,1,1,1   \\ 
channel num  & -         & 10        & 10,10     & 16,16,16  \\ 
pooling      & -         & -         & -         & mean      \\ 
pooling size & -         & -         & -         & 1,2,1     \\ 
padding      & SAME      & SAME      & SAME      & SAME      \\ 
M            & 384x0, 1K & 384x1, 1K & 384x2, 1K & 384x3, 1K \\ 
\bottomrule
\end{tabular}
      \captionof{table}{Model Configurations for Deep Convolutional Gaussian processes. \label{app:tab:dcgp-conf}}
    \end{minipage}
  \end{minipage}

For deep Gaussian processes, we let $M_l, h_{l}$ be the number of inducing points and the number of input units in the $l$-th layer, respectively. The variational posterior for the inducing points $\bU^l, \bU^l \in \Reals^{M_l \times h_{l+1}}$ in the $l$-th layer is usually a multivariate Gaussian,
\begin{align}
    q(\mathrm{vec}(\bU^l)) = \normal(\mathrm{vec}(\bM^l), \bSigma^l),
\end{align}
where $\bM^l \in \Reals^{M_l \times h_{l+1}}, \bSigma^l \in \Reals^{(M_l  h_{l+1})\times (M_l  h_{l+1})}$  are the mean and the covariance, respectively. A commonly-used structure for $\bSigma^l$ is the block-diagonal covariance \citep{salimbeni2017doubly} , i.e., assuming independence between output channels. However, the true posterior is not independent. Moreover, such covariance involves $h_{l+1}$  covariances of shape $M_l \times M_l$, which are both memory intensive and computation intensive. Therefore, following \citet{park2018deep}, we use the Kronecker-factored structure for the covariance, i.e., $\bSigma^l = \bSigma_o^{l} \otimes \bSigma_i^{l}$, where $\bSigma_o \in \Reals^{h_{l+1} \times h_{l+1}}, \bSigma_i \in \Reals^{M_l \times M_l}$ correspond to the output covariance and the input covariance, respectively.

Following \citet{shi2020sparse}, all models were optimized using 270k iterations with the Adam optimizer using a learning rate 0.003 and a batch size 64. We anneal the learning rate by 0.25 every 50k iterations to ensure convergence. Unlike \citet{shi2020sparse} which used a zero mean function, we used a convolution mean function whose filter is 1 for the center pixel and 0 everywhere else, since we observe it with a better performance. We used the robust multi-class classification likelihood. For lower layers in the deep convolutional GP, we used multi-output GPs for each input patch \citep{blomqvist2019deep}; for the output layer, we used the TICK kernel \citep{dutordoir2019translation}.  The patch kernels are RBF kernels with shared lengthscales, whose lengthscales and variances are initialized at $5$. The TICK location kernel is a Mat\'ern 3/2 kernel whose lengthscales and variances are initialized at $1$ and $3$, respectively. To initialize the inducing filters, we use K-means samples from $\min(100*M, 10000)$ random input patches, while the inputs in all layers are obtained by forwarding the image through a random Xavier convnet.

For the HVGP ($2\times M$) we use the negation transformation on the inducing points $G(\bz)=-\bz$. For the HVGP ($4 \times M$), we also use the negation transformation over two groups that are determined by pixel locations, as shown in Figure~\ref{app:fig-conv-g}.

\section{Proofs}

\subsection{Lemmas}

\begin{lemma}\label{app:lem:kfs-1} For any $t=0,...,T-1$, $\bx, \bx' \in \xdomain$,
\begin{align}
    \sum_{s=0}^{T-1}\sum_{s'=0}^{T-1} \bF^{H}_{t, s} \bF_{t, s'} k(G^s(\bx), G^{s'}(\bx'))
    = \sum_{s=0}^{T-1} \bF_{t, s} k(\bx, G^s(\bx')). \notag 
\end{align}
\end{lemma}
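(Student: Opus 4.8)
The plan is to use the two standing hypotheses — that $G$ is $T$-cyclic and that $k$ is $G$-invariant — to collapse the double sum on the left into the single sum on the right, via a change of summation index followed by a one-line DFT-coefficient cancellation.

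First I would record the elementary consequence of the hypotheses that $k(G^s(\bx), G^{s'}(\bx')) = k(\bx, G^{s'-s}(\bx'))$ for all $s, s'$, with the exponent read modulo $T$ (that is, $G^{s'-s} := G^{(s'-s)\bmod T}$). This holds because $G^T = \mathrm{id}$ makes $G$ a bijection of $\xdomain$ whose inverse is $G^{T-1}$, and $G^{T-1}$ also leaves $k$ invariant (apply $G$-invariance to the pair $G^{T-1}(\ba), G^{T-1}(\bb)$); hence one may apply $G^{-s}$ to both arguments of $k$. Substituting this identity into the left-hand side and reindexing the inner sum by $r := (s'-s)\bmod T$ — which, for each fixed $s$, is a bijection of $\{0,\dots,T-1\}$ — rewrites the left-hand side as $\sum_{s=0}^{T-1}\sum_{r=0}^{T-1} \bF^H_{t,s}\,\bF_{t,(s+r)\bmod T}\, k(\bx, G^r(\bx'))$.

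Then I would invoke the explicit entries $\bF_{t,s} = \tfrac1T e^{-i 2\pi t s/T}$ and $\bF^H_{t,s} = \overline{\bF_{s,t}} = \tfrac1T e^{+i 2\pi t s/T}$, together with the $T$-periodicity of $s\mapsto e^{-i2\pi ts/T}$, to compute $\bF^H_{t,s}\,\bF_{t,(s+r)\bmod T} = \tfrac1{T^2} e^{-i 2\pi t r/T}$, which is independent of $s$. Summing over the $T$ values of $s$ contributes a factor $T$ and leaves $\sum_{r=0}^{T-1}\tfrac1T e^{-i2\pi tr/T} k(\bx, G^r(\bx')) = \sum_{r=0}^{T-1}\bF_{t,r}\, k(\bx, G^r(\bx'))$, the right-hand side.

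The only places requiring care are the index bookkeeping modulo $T$ and fixing the conjugate-transpose convention for $\bF$; once those are pinned down the argument is a short cancellation, so I do not expect a genuine obstacle — this lemma is a stepping stone toward the decomposition and orthogonality results rather than a difficulty in its own right.
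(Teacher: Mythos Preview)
Your proposal is correct and matches the paper's proof essentially step for step: both use $G$-invariance plus $T$-cyclicity to replace $k(G^s(\bx),G^{s'}(\bx'))$ by $k(\bx,G^{s'-s}(\bx'))$, reindex by $r=s'-s\bmod T$, and then cancel the $s$-sum via the explicit DFT entries to pick up a single factor of $T$. Your write-up is in fact slightly more careful than the paper's about the modular bookkeeping and the conjugate-transpose convention for $\bF$.
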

\begin{proof}[Proof] We prove the equality by the expression of $\bF$,
\begin{align}
    &\frac{1}{T^2}\sum_{s=0}^{T-1}\sum_{s'=0}^{T-1} e^{-i\frac{2\pi t}{T}(s'-s) }k(G^{s}(\bx), G^{s + (s'-s)}(\bx')) =\frac{1}{T}\sum_{s_0=0}^{T-1} e^{-i\frac{2\pi t }{T}s_0 }k(\bx, G^{s_0}(\bx')), \notag 
\end{align}
where we used the kernel invariance to $G$. Also, since $G$ is $T$-cyclic, we changed the variable $s'-s$ to $s_0$ and $s_0$ still ranges from $0$ to $T-1$.
\end{proof}

\begin{lemma}\label{app:lem:kfs-orth}For any $0 \leq t_1 \neq t_2 \leq T-1$, $\bx, \bx' \in \xdomain$,
\begin{align}
    \sum_{s_1=0}^{T-1} \sum_{s_2=0}^{T-1} \bF^H_{t_1,s_1} \bF_{t_2,s_2} k(G^{s_1}(\bx), G^{s_2}(\bx'))  = 0, \label{eq:kfs-orth-1} \\
     \sum_{s_1=0}^{T-1} \sum_{s_2=0}^{T-1} \bF_{t_1,s_1} \bF^H_{t_2,s_2} k(G^{s_1}(\bx), G^{s_2}(\bx'))  = 0, \label{eq:kfs-orth-2} 
\end{align}
\end{lemma}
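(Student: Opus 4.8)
The plan is to follow the same strategy as the proof of \Cref{app:lem:kfs-1}: substitute the explicit entries of the DFT matrix, use the $G$-invariance of $k$ to collapse $k(G^{s_1}(\bx), G^{s_2}(\bx'))$ into a function of the single index $s_2 - s_1$ modulo $T$, and then recognize the leftover sum over one of the indices as a geometric sum of $T$-th roots of unity that vanishes precisely because $t_1 \neq t_2$.

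Concretely, recall $\bF_{t,s} = \tfrac1T e^{-i 2\pi ts/T}$ and $\bF^H_{t,s} = \tfrac1T e^{i 2\pi ts/T}$. For \eqref{eq:kfs-orth-1}, I would first write the left-hand side as $\tfrac{1}{T^2}\sum_{s_1,s_2} e^{i 2\pi t_1 s_1/T}\, e^{-i 2\pi t_2 s_2/T}\, k(G^{s_1}(\bx), G^{s_2}(\bx'))$. Applying $G^{-s_1}$ (legitimate since $G$ is $T$-cyclic, so $G^{-s_1} = G^{T - s_1}$) to both arguments and invoking $G$-invariance gives $k(G^{s_1}(\bx), G^{s_2}(\bx')) = k(\bx, G^{s_2 - s_1}(\bx'))$. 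Reindexing by $s_0 := (s_2 - s_1) \bmod T$ — exactly as in \Cref{app:lem:kfs-1}, and harmless because every exponential factor is $T$-periodic in its integer argument — turns the double sum into $\tfrac{1}{T^2}\big(\sum_{s_0=0}^{T-1} e^{-i 2\pi t_2 s_0/T} k(\bx, G^{s_0}(\bx'))\big)\big(\sum_{s_1=0}^{T-1} e^{i 2\pi (t_1 - t_2) s_1/T}\big)$. The inner sum over $s_1$ is $\sum_{s_1=0}^{T-1} \zeta^{s_1}$ with $\zeta = e^{i 2\pi(t_1 - t_2)/T}$; since $0 \le t_1 \neq t_2 \le T-1$ we have $\zeta \neq 1$ while $\zeta^T = 1$, so this sum is $0$, establishing \eqref{eq:kfs-orth-1}.

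Equation \eqref{eq:kfs-orth-2} is obtained by the verbatim same manipulation with the two conjugations interchanged: the left-hand side equals $\tfrac{1}{T^2}\sum_{s_1,s_2} e^{-i 2\pi t_1 s_1/T}\, e^{i 2\pi t_2 s_2/T}\, k(G^{s_1}(\bx), G^{s_2}(\bx'))$, and after the same invariance step and the reindexing $s_0 = (s_2 - s_1)\bmod T$ it factors as $\tfrac{1}{T^2}\big(\sum_{s_0} e^{i 2\pi t_2 s_0/T} k(\bx, G^{s_0}(\bx'))\big)\big(\sum_{s_1} e^{i 2\pi (t_2 - t_1) s_1/T}\big)$, whose inner sum again vanishes for $t_1 \neq t_2$. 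I do not expect any genuine obstacle here; the only point requiring care is the bookkeeping of the modular reindexing $s_0 = s_2 - s_1$ — one must check that as $(s_1,s_2)$ ranges over $\{0,\dots,T-1\}^2$ the pair $(s_1,s_0)$ ranges over the same set, and that the exponentials depend on $s_0$ only through $s_0 \bmod T$ — but this is precisely the device already used in \Cref{app:lem:kfs-1}, so it is routine.
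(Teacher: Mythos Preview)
Your proposal is correct and essentially identical to the paper's own proof: both use $G$-invariance to replace $k(G^{s_1}(\bx),G^{s_2}(\bx'))$ by $k(\bx,G^{s_2-s_1}(\bx'))$, reindex via $s_0=s_2-s_1\bmod T$, and factor out an inner sum over $s_1$ that vanishes by orthogonality of the DFT columns (equivalently, the geometric sum of $T$-th roots of unity). The only cosmetic difference is that the paper keeps the computation in $\bF$-matrix notation and cites column orthogonality at the end, whereas you expand the exponentials explicitly.
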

\begin{proof}[Proof]
Below we prove \eqref{eq:kfs-orth-1}. 
The proof of \eqref{eq:kfs-orth-2} follows similarly.
\begin{align}
& \sum_{s_1=0}^{T-1} \sum_{s_2=0}^{T-1} \bF^H_{t_1,s_1} \bF_{t_2,s_2} k(G^{s_1}(\bx), G^{s_2}(\bx')) = \sum_{s_1=0}^{T-1} \sum_{s_2=0}^{T-1}  \bF^H_{t_1,s_1} \bF_{t_2,s_1 + s_2} k(G^{s_1}(\bx), G^{s_1+s_2}(\bx')) \notag \\
&= \sum_{s_2=0}^{T-1} k( \bx, G^{s_2}(\bx')) \sum_{s_1=0}^{T-1}\bF^H_{t_1,s_1} \bF_{t_2,s_1 + s_2} 
= \sum_{s_2=0}^{T-1} k( \bx, G^{s_2}(\bx')) e^{-i \frac{2\pi t s_2}{T}}\sum_{s_1=0}^{T-1}\bF^H_{t_1,s_1} \bF_{t_2,s_1}
= 0. \notag
\end{align}
In the last step, $\sum_{s_1=0}^{T-1}\bF^H_{t_1,s_1} \bF_{t_2,s_1} = 0$ whenever $t_1 \neq t_2$. 
This is because the columns of $\bF$ form an orthogonal basis over the set of $T$-dimensional complex vectors.
\end{proof}

\begin{lemma}\label{app:lem:gp-decomp-value} Under the harmonic formulation, for $\bx \in \xdomain$,
\begin{align}
    f_t(\bx) = \sum_{s=0}^{T-1} \bF^H_{t,s} f(G^s(\bx)).
\end{align}
\end{lemma}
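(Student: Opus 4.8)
The plan is to work inside the joint law of $(f,f_0,\dots,f_{T-1})$ supplied by the additive formulation~\eqref{eq:additive-gp}: the $f_t$ are mutually independent with $f_t\sim\GP(0,k_t)$ and $f=\sum_t f_t$, so in this joint zero-mean Gaussian family $\Cov(f_t(\bx),f_{t'}(\bx'))=\delta_{tt'}k_t(\bx,\bx')$ and hence $\Cov(f(\bx),f(\bx'))=\sum_t k_t(\bx,\bx')=k(\bx,\bx')$. Fix $\bx\in\xdomain$ and write $Y:=\sum_{s=0}^{T-1}\bF^H_{t,s}f(G^s(\bx))$, a linear image of the $f(G^s(\bx))$, so $Y$ is zero-mean. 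Since $f_t(\bx)-Y$ has mean zero, it suffices to show $\EE\big[\,|f_t(\bx)-Y|^2\,\big]=0$, because a random variable with vanishing second moment is almost surely $0$; this then gives the identity (almost surely, for each fixed $\bx$). Expanding, $\EE|f_t(\bx)-Y|^2=\EE|f_t(\bx)|^2+\EE|Y|^2-2\,\RealPart\EE[f_t(\bx)\overline Y]$, so everything reduces to computing these three quantities.

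The first is $\EE|f_t(\bx)|^2=k_t(\bx,\bx)$, and $k_t(\bx,\bx)\in\Reals$ since $k_t$ is Hermitian. For the second I would expand $\EE|Y|^2=\sum_{s,s'}\bF^H_{t,s}\overline{\bF^H_{t,s'}}\,\EE[f(G^s(\bx))\overline{f(G^{s'}(\bx))}]$; using $\overline{\bF^H_{t,s'}}=\bF_{t,s'}$ and $\EE[f(G^s(\bx))\overline{f(G^{s'}(\bx))}]=k(G^s(\bx),G^{s'}(\bx))$, this is $\sum_{s,s'}\bF^H_{t,s}\bF_{t,s'}k(G^s(\bx),G^{s'}(\bx))$, which equals $k_t(\bx,\bx)$ by Lemma~\ref{app:lem:kfs-1} (taking both inputs to be $\bx$) together with the definition~\eqref{eq:def-kt} of $k_t$. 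For the cross term, mutual independence of the $f_{t'}$ collapses $\EE[f_t(\bx)\overline{f(G^s(\bx))}]=\sum_{t'}\EE[f_t(\bx)\overline{f_{t'}(G^s(\bx))}]$ down to $\EE[f_t(\bx)\overline{f_t(G^s(\bx))}]=k_t(\bx,G^s(\bx))$, and Proposition~\ref{prop:invariance} gives $k_t(\bx,G^s(\bx))=e^{i2\pi ts/T}k_t(\bx,\bx)$; hence $\EE[f_t(\bx)\overline Y]=\sum_s\bF_{t,s}e^{i2\pi ts/T}k_t(\bx,\bx)=k_t(\bx,\bx)$, since $\bF_{t,s}e^{i2\pi ts/T}=\tfrac1T$ for every $s$. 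Substituting the three values, $\EE|f_t(\bx)-Y|^2=k_t(\bx,\bx)+k_t(\bx,\bx)-2k_t(\bx,\bx)=0$, which is the claim.

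I do not expect a real obstacle; the only thing to watch is the complex bookkeeping — keeping straight which index of $\bF$ versus $\bF^H$ appears and where the conjugates go — and the fact that $f$ may itself be complex-valued when $k$ is Hermitian rather than real, though the computation above is written to cover that case uniformly (it only uses $\Cov(f(\cdot),f(\cdot))=k$, $\Cov(f_t(\cdot),f_t(\cdot))=k_t$, and independence). As a consistency check, writing $Y_t(\bx):=\sum_s\bF^H_{t,s}f(G^s(\bx))$ and using $\sum_{t=0}^{T-1}\bF^H_{t,s}=\mathbb{I}[s=0]$ gives $\sum_t Y_t(\bx)=f(G^0(\bx))=f(\bx)$, recovering $f=\sum_t f_t$ and matching the orthogonal-sum picture of Theorem~\ref{thm:rkhs-decom}. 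A slightly more structural variant, avoiding Proposition~\ref{prop:invariance}, is to verify directly that the processes $Y_t$ are jointly Gaussian with $\Cov(Y_{t_1}(\bx),Y_{t_2}(\bx'))=k_{t_1}(\bx,\bx')$ when $t_1=t_2$ (Lemma~\ref{app:lem:kfs-1}) and $=0$ otherwise (Lemma~\ref{app:lem:kfs-orth}, via~\eqref{eq:kfs-orth-1}), and that $\sum_t Y_t=f$; this identifies $(Y_0,\dots,Y_{T-1})$ with the independent decomposition $(f_0,\dots,f_{T-1})$ underlying the harmonic formulation.
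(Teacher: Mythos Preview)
Your proof is correct and takes a genuinely different route from the paper's. The paper first establishes that the $T\times T$ matrix $[k_t(G^j(\bx),G^{j'}(\bx))]_{j,j'}$ has rank one, namely $k_t(G^j(\bx),G^{j'}(\bx))=e^{-2\pi i t(j-j')/T}k_t(\bx,\bx)$, and therefore represents each $f_{t'}$ explicitly on the orbit as $f_{t'}(G^j(\bx))=\sqrt{k_{t'}(\bx,\bx)}\,e^{-2\pi i t' j/T}\epsilon_{t'}$ for a single scalar Gaussian $\epsilon_{t'}$; plugging this into $\sum_s\bF^H_{t,s}\sum_{t'}f_{t'}(G^s(\bx))$ and using orthogonality of the DFT columns then isolates the $t'=t$ term. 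Your argument bypasses this explicit rank-one representation entirely: you show $\EE|f_t(\bx)-Y|^2=0$ by reducing each of the three second moments to $k_t(\bx,\bx)$ via Lemma~\ref{app:lem:kfs-1} and Proposition~\ref{prop:invariance}. This is cleaner and more robust to bookkeeping, and your ``structural variant'' using Lemmas~\ref{app:lem:kfs-1} and~\ref{app:lem:kfs-orth} to match the joint covariance of $(Y_0,\dots,Y_{T-1})$ with that of $(f_0,\dots,f_{T-1})$ is perhaps the most transparent of all. What the paper's approach buys in exchange is the explicit one-dimensional description of each $f_t$ along an orbit, which is a nice structural fact in its own right even if not strictly needed for the lemma.
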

\begin{proof}[Proof] We consider a marginal distribution on a subset of function values,
\begin{align}
    p(\{f(G^s(\bx))\}_{s=0}^{T-1}, \{f_0(G^s(\bx))\}_{s=0}^{T-1}, ..., \{f_{T-1}(G^s(\bx))\}_{s=0}^{T-1}), \notag
\end{align}
The distribution can be represented as,
\begin{align}
    f(G^s(\bx)) &= \sum_{t=0}^{T-1} f_t(G^s(\bx)) , s=0,...,T-1,\notag \\
    \bbf_t(G^{0:T-1}(\bx)) &\sim \normal(0, \bK_t(G^{0:T-1}(\bx), G^{0:T-1}(\bx))), \notag 
\end{align}
We first investigate the structure of the kernel matrix $\bK_t$, 
\begin{align}
    k_t(G^j(\bx), G^{j'}(\bx)) 
    &= \sum_{s=0}^{T-1} \bF_{t,s} k(G^j(\bx), G^{s+j'}(\bx)) 
    =\sum_{s=0}^{T-1} \bF_{t,s} k(\bx, G^{s+j'-j }(\bx)) = \sum_{s=0}^{T-1} \bF_{t,s+j-j'} k(\bx, G^{s}(\bx)) \notag \\
    &= e^{-\frac{2\pi i t(j-j')}{T}}\sum_{s=0}^{T-1} \bF_{t,s} k(\bx, G^{s}(\bx)) = e^{-\frac{2\pi it(j-j')}{T}} k_t(\bx, \bx),\notag
\end{align}
Therefore, the matrix $\bK_t = [k_t(\bx, \bx) e^{-\frac{2\pi it(j-j')}{T}}]_{j,j'=0}^{T-1}$. Let $\epsilon_t \in \Reals$ be a random Gaussian noise, then the random vector of $f_t$ can be written as,
\begin{align}
    \bbf_t(G^{0:T-1}(\bx)) = [\sqrt{k_t(\bx, \bx)} e^{-\frac{2\pi it j}{T}}  \epsilon_t]_{j=0}^{T-1},
\end{align}

Now we can compute the RHS in the lemma,
\begin{align}
    \sum_{s=0}^{T-1} \bF^H_{t,s} f(G^s(\bx)) 
    &= \sum_{s=0}^{T-1} \bF^H_{t,s} \sum_{t'=0}^{T-1} f_{t'}(G^s(\bx)) = \sum_{t'=0}^{T-1}  \sum_{s=0}^{T-1} \bF^H_{t,s} f_{t'}(G^s(\bx)) , \notag
\end{align}
If $t'=t$, 
\begin{align}
    \sum_{s=0}^{T-1} \bF^H_{t,s} f_{t}(G^s(\bx)) 
    &=  \sum_{s=0}^{T-1} \bF^H_{t,s} \sqrt{k_t(\bx, \bx)} e^{-\frac{2\pi it s}{T}}  \epsilon_t
    = \sqrt{k_t(\bx, \bx)}\epsilon_t = f_t(\bx).
\end{align}
If $t' \neq t$,
\begin{align}
    \sum_{s=0}^{T-1} \bF^H_{t,s} f_{t'}(G^s(\bx)) 
    &=  \sum_{s=0}^{T-1} \bF^H_{t,s} \sqrt{k_{t'}(\bx, \bx)} e^{-\frac{2\pi it' s}{T}}  \epsilon_t = \sqrt{k_{t'}(\bx, \bx)} \epsilon_t \sum_{s=0}^{T-1} e^{\frac{2\pi i(t-t')}{T}} = 0.
\end{align}
Therefore, 
\begin{align}
    \sum_{s=0}^{T-1} \bF^H_{t,s} f(G^s(\bx)) =f_t(\bx).
\end{align}
Because this holds for all marginal distributions, it holds as well for the function samples from Gaussian processes.\end{proof}

\subsection{Proofs for Sec~\ref{sec:kern-decomp}}\label{app:proof:ortho}
\begin{proof}[\textbf{Proof of Proposition~\ref{prop:invariance}}]
The equality can be directly proven,
\begin{align}
    k_t(\bx, G(\bx')) 
    &= \sum_{s=0}^{T-1} \bF_{t, s} k(\bx, G^{s+1}(\bx')) = \sum_{s=0}^{T-1} \bF_{t, s-1} k(\bx, G^{s}(\bx')) \notag \\
    &= \sum_{s=0}^{T-1} \frac{1}{T} e^{-i\frac{2\pi t (s-1)}{T} } k(\bx, G^{s}(\bx')) = e^{i\frac{2\pi t}{T}} k_t(\bx, \bx'). 
\end{align}
\end{proof}

\begin{proof}[\textbf{Proof of Theorem~\ref{thm:kern-decomp}}]\label{proof:app:kern-decomp}
The equality can be directly proven,
\begin{align}
   \sum_{t=0}^{T-1} k_t(\bx, \bx')
    &= \sum_{t=0}^{T-1} \sum_{s=0}^{T-1} \frac{1}{T}e^{-i\frac{2\pi st}{T}} k(\bx, G^s(\bx')) 
    =\frac{1}{T}  \sum_{s=0}^{T-1} k(\bx, G^s(\bx')) \sum_{t=0}^{T-1} e^{-i\frac{2\pi st}{T} } 
    = \frac{1}{T}  \sum_{s=0}^{T-1} k(\bx, G^s(\bx')) T\delta_{s} = k(\bx, \bx'). \notag
\end{align}
To prove that $k_t$ is a kernel, we observe from Lemma~\ref{app:lem:kfs-1} that $k_t(\bx, \bx') = \bF_{t,:}^H \bK \bF_{t,:}$, where $\bK = [k(G^{s_1}(\bx), G^{s_2}(\bx'))]_{s_1, s_2=0}^{T-1}$. Since $k$ is a kernel,  we conclude that $k_t(\bx, \bx') = \bF_{t,:}^H \bK \bF_{t,:}$ is a kernel as well.
%

\end{proof}

\begin{proof}[\textbf{Proof of Lemma~\ref{lem:ortho}}]\label{proof:kern-ortho}
We firstly prove that, for all $t_1 \neq t_2$ and $\bx, \bx'$, $\innerprod{k_{t_1}(\cdot, \bx)}{k_{t_2}(\cdot, \bx')}_{\rkhs_k} = 0$. The RKHS inner product can be computed as,
\begin{align}
&\innerprod{k_{t_1}(\cdot, \bx)}{k_{t_2}(\cdot, \bx')}_{\rkhs_k} = \sum_{s_1=0}^{T-1} \sum_{s_2=0}^{T-1} \bF_{t_1,s_1} \bF^H_{t_2,s_2} k(G^{s_1}(\bx), G^{s_2}(\bx')) =0,
\end{align}
where the last equality is due to Lemma~\ref{app:lem:kfs-orth}.

Moreover, if the functions $f, g$ can be written as linear combinations of the corresponding kernels, 
\begin{align}
 f(\bx) = \sum_s a_s k_{t_1}(\bx, \bx^s_{t_1}) ; \;
 g(\bx) = \sum_s b_s k_{t_2}(\bx, \bx^s_{t_2}) \notag 
\end{align}
Following that $\innerprod{k_{t_1}(\cdot, \bx)}{k_{t_2}(\cdot, \bx')}_{\rkhs_k} $ for all $\bx, \bx'$, $\innerprod{f}{g}_{\rkhs_k} = 0$ as well.

Based on Moore-Aronszajn Theorem \citep{aronszajn1950theory, berlinet2011reproducing}, the RKHS spaces $\rkhs_{k_{t_1}}$ and $\rkhs_{k_{t_1}}$ are the set of functions which are pointwise limits of Cauchy sequences in the form $f_n(\bx) = \sum_s a_s k_{t_1}(\bx, \bx^s_{t_1}) $ and $g_n(\bx) = \sum_s b_s k_{t_2}(\bx, \bx^s_{t_2})$, respectively. Moreover, based on the \citet[Lemma 5]{berlinet2011reproducing}, the inner product of two pointwisely convergent Cauchy sequences also converges. We conclude that for any $f \in \rkhs_{k_{t_1}}, g \in \rkhs_{k_{t_2}}$, $\innerprod{f}{g}_{\rkhs_k} = 0$.
\end{proof}

\begin{proof}[\textbf{Proof of Proposition~\ref{prop:disjoint}}]
Without loss of generality, we only need to prove that,
\begin{align}
\rkhs_1 \cap \rkhs_2 = \{0\}, \notag
\end{align}
Firstly, $0 \in \rkhs_1, 0\in \rkhs_2$ because $\rkhs_1$ and $\rkhs_2$ are Hilbert spaces. Then we assume another function $f \neq 0$ and $f \in \rkhs_1 \cap \rkhs_2$. By Lemma~\ref{lem:ortho}, $\|f\|_{\rkhs_k} = \innerprod{f}{f}_{\rkhs_k} = 0$, which is contradictory to $f \neq 0$ and $\rkhs_k$ being a Hilbert space.
\end{proof}

\begin{proof}[\textbf{Proof of Theorem~\ref{thm:rkhs-decom}}]
We use $\rkhs_t$ to represent the RKHS corresponding to the kernel $k_t$. Given a function $f \in \rkhs_k$, we firstly assume $f$ can be written as a linear combination of the kernel functions,
\begin{align}
f(\bx) = \sum_s a_s k(\bx, \bx^s) , \notag 
\end{align}
Based on the kernel sum decomposition, we can rewrite $f$,
\begin{align}
    f(\bx) = \sum_s a_s \sum_{t=0}^{T-1} k_t(\bx, \bx^s) = \sum_{t=0}^{T-1} \underbrace{\sum_s a_s k_t(\bx, \bx^s) }_{:=f_t(\bx)}, \notag 
\end{align}
Because $f_t$ is a linear combination of $k_t$, $f_t \in \rkhs_{t}$, for $t=0,...,T-1$. Proposition~\ref{prop:disjoint} states that the RKHSs $\rkhs_{t_1}, \rkhs_{t_2}$ are disjoint except the zero function, thus $f = \sum_{t=0}^{T-1} f_t$ is a unique expansion of $f$ to these RKHSs. Moreover, we can represent the function $f$ alternatively,
\begin{align}
    f(\bx) &= \innerprod{f}{k(\bx, \cdot)}_{\rkhs_k} = \sum_{t=0}^{T-1}\innerprod{f}{k_t(\bx, \cdot)}_{\rkhs_k} 
    = \sum_{t=0}^{T-1}\innerprod{\sum_{t'=0}^{T-1} f_{t'}}{k_t(\bx, \cdot)}_{\rkhs_k}  = \sum_{t=0}^{T-1}\innerprod{f_t}{k_t(\bx, \cdot)}_{\rkhs_k}, \notag
\end{align}
where the last equality uses the orthogonality between RKHSs. By using the orthogonality again, we also show that,
\begin{align}
    \innerprod{f_t}{k_t(\bx, \cdot)}_{\rkhs_k} &= \innerprod{f_t}{\sum_{t'=0}^{T_1}k_{t'}(\bx, \cdot)}_{\rkhs_k} 
     = \innerprod{f_t}{k(\bx, \cdot)}_{\rkhs_k} = f_t(\bx),
\end{align}
Therefore, $f(\bx) = \sum_{t=0}^{T-1}\innerprod{f}{k_t(\bx, \cdot)}_{\rkhs_k}$ uniquely separates $f$ into these RKHSs. More generally, if $f$ is the pointwise limits of Cauchy sequences of functions in the form of linear combinations of the kernel function. Based on the \citet[Lemma 5]{berlinet2011reproducing}, the inner product of two pointwisely convergent Cauchy sequences also converges. We conclude that for any $f \in \rkhs_{k}$,
\begin{align}
    f(\bx) = \sum_{t=0}^{T-1}\innerprod{f}{k_t(\bx, \cdot)}_{\rkhs_k},
\end{align}
uniquely decomposes the function $f$ into the RKHSs $\rkhs_t, t=0,...,T-1$.

Based on \citet[Theorem 5]{berlinet2011reproducing}, the squared RKHS norm  of $f$ can be written as the sum of squared RKHS norms,
\begin{align}
    \|f\|_{\rkhs_k}^2 = \sum_{t=0}^{T-1} \|f_t\|_{\rkhs_t}^2 .\notag 
\end{align}

\end{proof}

\subsection{Proof of Sec~\ref{sec:hvgp}}\label{app:prof:lemma-dvi}

\begin{proof}[\textbf{Proof of Theorem~\ref{lem:q-equi}}]

Under the HVGP formulation, the inducing variable $u_t = f_t(\bz)$. From Lemma~\ref{app:lem:gp-decomp-value}, we have $f_t(\bz) = \sum_{s=0}^{T-1} \bF^H_{t, s} f(G^s(\bz))$.

Under the inter-domain formulation, let $w_t$ be the inter-domain inducing point corresponding to $\bz$ in $k_t$,
\begin{align}
     w_t = \sum_{s=0}^{T-1} \bF^H_{t, s} \delta_{G^s(\bz)}, \notag 
\end{align}
Then the inducing variable corresponding to $w_t$ is,
\begin{align}\label{app:eq:ut}
    u_{w_t} = \int f(\bx) w_t(\bx) d\bx = \sum_{s=0}^{T-1} \bF^H_{t, s} f(G^s(\bz)),
\end{align}
Therefore, the two inducing variables are the same,
\begin{align}
    u_{w_t} = u_t.
\end{align}

Therefore, the variational posterior under the harmonic formulation can be rewritten in an inter-domain SVGP form,
\begin{align}
    q^{inter}(f, \bU) = p(f|\bU; \{\bw_t\}_{t=0}^{T-1}) q(\vectorize{\bU}),
\end{align}
where $\bU :=[\bu_0, ..., \bu_{T-1}]^{\top}$ and $p$ is the inter-domain Gaussian process.

Now we connect the inter-domain SVGP to the standard SVGP using $\{G^t(\bZ)\}_{t=0}^{T-1}$. For the standard SVGP, the inducing variables are $\bv_t = f(G^t(\bZ))$, and $\bV:=[\bv_0, ..., \bv_{T-1}]^{\top} \in \Complex^{T \times m}$. For the inter-domain SVGP, the inducing variables are $\bu_t$. As shown in Eq.~\eqref{app:eq:ut}, $\bu_t = \sum_{s=0}^{T-1} \bF^H_{t, s} \bv_s$, then we have the equality,
\begin{align}
    \bU = \bF^H \bV,
\end{align}
Because of the bijective linearity, 
\begin{align}
p(f|\bU; \{\bw_t\}_{t=0}^{T-1}) = p(f|\bV; \{G^t(\bZ)\}_{t=0}^{T-1}),
\end{align}
Furthermore, the variational posterior for $\bV$ is $\normal(\vectorize{\bV}| \vectorize{\bM_{v}}, \bS_v)$, which is equivalent to the variational posterior for $\bU$,
\begin{align}
    q(\vectorize{\bU}) = \normal(\vectorize{\bU}| \vectorize{\bF^H \bM_{v}}, (\bI \otimes \bF^H) \bS_v  (\bI \otimes \bF)). \notag 
\end{align}
So the argument has been proved.

%
%
%
%
%
\end{proof}

\begin{lemma}[Variational Gaussian Approximations]\label{lem:vari-gau-apprx} Let $\normal(\bmu, \bS)$ be a Gaussian variational posterior for a SVGP, then the optimal $\bS^{\star}$ is in the form of,
\begin{align}
    \bS^{\star} =  \bK_{\bu \bu} \left( \bK_{\bu \bu} + \bK_{\bu \bbf} \bLambda \bK_{\bbf \bu} \right)^{-1} \bK_{\bu \bu}.
\end{align}
where $\bLambda = \diag([\lambda_n]_{n=1}^N)$ is diagonal, 
\begin{align}
    \lambda_n = -2\nabla_{\sigma^2_n}\expect_{q(f_n)} [\log p(y_n | f_n)],
\end{align}
where $\sigma^2_n$ is the predictive variance of $f_n$ under the variational posterior.
\end{lemma}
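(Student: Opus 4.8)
The plan is to solve the first-order stationarity condition $\grad_{\bS}\elbo=\bzero$ for $\bS$. The key observation is that, writing $\bk_{\bu n}:=[k(\bz_m,\bx_n)]_{m=1}^{M}$ for the cross-covariance between the inducing locations and the $n$-th training input, the variational marginal $q(f_n)$ is Gaussian with mean $\bk_{n\bu}\bK_{\bu\bu}^{-1}\bmu$ and variance
\begin{align}
  \sigma_n^2 \;=\; k(\bx_n,\bx_n)-\bk_{n\bu}\bK_{\bu\bu}^{-1}\bk_{\bu n}+\bk_{n\bu}\bK_{\bu\bu}^{-1}\bS\,\bK_{\bu\bu}^{-1}\bk_{\bu n}, \notag
\end{align}
so that $\bS$ enters the ELBO only through the variances $\sigma_1^2,\dots,\sigma_N^2$ and through the KL term, and $\sigma_n^2$ is \emph{affine} in $\bS$.

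First I would differentiate term by term. For the KL between Gaussians, standard matrix identities give $\grad_{\bS}\KL{\normal(\bmu,\bS)}{\normal(\bzero,\bK_{\bu\bu})}=\tfrac12\bigl(\bK_{\bu\bu}^{-1}-\bS^{-1}\bigr)$, and from the display above $\grad_{\bS}\sigma_n^2=\bK_{\bu\bu}^{-1}\bk_{\bu n}\bk_{n\bu}\bK_{\bu\bu}^{-1}$. Applying the chain rule and writing $\lambda_n:=-2\,\grad_{\sigma_n^2}\expect_{q(f_n)}[\log p(y_n\given f_n)]$, the condition $\grad_{\bS}\elbo=\bzero$ becomes
\begin{align}
  -\tfrac12\sum_{n=1}^N\lambda_n\,\bK_{\bu\bu}^{-1}\bk_{\bu n}\bk_{n\bu}\bK_{\bu\bu}^{-1}-\tfrac12\bigl(\bK_{\bu\bu}^{-1}-\bS^{-1}\bigr)=\bzero. \notag
\end{align}
Rearranging and recognising $\sum_n\lambda_n\bk_{\bu n}\bk_{n\bu}=\bK_{\bu\bbf}\bLambda\bK_{\bbf\bu}$ with $\bLambda=\diag([\lambda_n]_{n=1}^N)$ yields $\bS^{-1}=\bK_{\bu\bu}^{-1}\bigl(\bK_{\bu\bu}+\bK_{\bu\bbf}\bLambda\bK_{\bbf\bu}\bigr)\bK_{\bu\bu}^{-1}$, and inverting gives the claimed formula for $\bS^{\star}$.

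Two points then remain to be tidied up. To confirm the stationary point is the global maximiser rather than a saddle, I would either check that the Hessian of the ELBO in $\bS^{-1}$ is negative definite directly, or invoke the standard concavity of the Gaussian ELBO in the variational parameters (which holds for log-concave likelihoods). To see that $\bS^{\star}$ is well defined and positive definite, I would use the Gaussian identity $\grad_{\sigma^2}\expect_{\normal(\mu,\sigma^2)}[h(f)]=\tfrac12\expect_{\normal(\mu,\sigma^2)}[h''(f)]$, which gives $\lambda_n=-\expect_{q(f_n)}[\partial_f^2\log p(y_n\given f_n)]\ge 0$ whenever the likelihood is log-concave, hence $\bK_{\bu\bu}+\bK_{\bu\bbf}\bLambda\bK_{\bbf\bu}\succ 0$. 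The main obstacle is essentially bookkeeping: carrying the chain rule through $\sigma_n^2$ and keeping the $\bK_{\bu\bu}^{-1}$ factors in their correct positions when collapsing the sum into $\bK_{\bu\bbf}\bLambda\bK_{\bbf\bu}$; the rest is routine.
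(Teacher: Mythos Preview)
Your proposal is correct and follows essentially the same approach as the paper: both compute $\grad_{\bS}\elbo$ by chain-ruling through the predictive variances $\sigma_n^2$ and the KL term, set the result to zero, and invert. Your treatment is in fact slightly more thorough, since you additionally discuss second-order optimality and positive definiteness of $\bS^\star$, points the paper's proof does not address.
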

\begin{proof}[Proof]
Given the variational posterior, the predictive distribution of $f_n$ can be computed as,
\begin{align}
    \normal(\bk_{f \bu} \bK_{\bu\bu}^{-1}\bmu, k_{ff} +  \bk_{f \bu}\bK_{\bu\bu}^{-1} (\bS - \bK_{\bu\bu})\bK_{\bu\bu}^{-1} \bk_{\bu f}), \notag
\end{align}
where we denote the predictive variance as $\sigma_n^2$. The variational posterior is optimized by maximizing the ELBO, which can be computed as,
\begin{align}
    \loss = \sum_{n=1}^N \expect_{q(f_n)} [\log p(y_n | f_n)] - \KL{\normal(\bmu, \bS)}{\normal(\bzero, \bK_{\bu\bu})}, \notag 
\end{align}
We compute the derivatives of $\loss$ towards $\bS$,
\begin{align}
    \nabla_{\bS} \loss &= -\frac{1}{2}\sum_{n=1}^N \lambda_n \nabla_{\bS} \sigma_n^2 - \frac{1}{2}(\bK_{\bu\bu}^{-1} - \bS^{-1}) 
    = -\frac{1}{2}\bK_{\bu\bu}^{-1} (\sum_{n=1}^N \lambda_n \bk_{ \bu f_n} \bk_{f_n \bu})\bK_{\bu\bu}^{-1} - \frac{1}{2}(\bK_{\bu\bu}^{-1} - \bS^{-1}) \notag \\
    &=  -\frac{1}{2}\bK_{\bu\bu}^{-1} \bK_{\bu\bbf} \bLambda \bK_{\bbf \bu} \bK_{\bu\bu}^{-1} - \frac{1}{2} \bK_{\bu\bu}^{-1} + \frac{1}{2}\bS^{-1},
\end{align}
Let the derivative be zero, we obtain the optimal $\bS^{\star}$,
\begin{align}
    \bS^{\star} = \bK_{\bu\bu}(\bK_{\bu\bu} + \bK_{\bu\bbf} \bLambda \bK_{\bbf \bu})^{-1}\bK_{\bu\bu}. \notag 
\end{align}
\end{proof}

\begin{proof}[\textbf{Proof of Theorem~\ref{lem:decoupled}}] Based on Lemma~\ref{lem:vari-gau-apprx}, the optimal posterior covariance is 
\begin{align}
    \bS^{\star} = \bK_{\bu \bu} \left( \bK_{\bu \bu} +\bK_{\bu \bbf}\bLambda \bK_{\bbf \bu} \right)^{-1} \bK_{\bu \bu},
\end{align}
Given that $\bK_{\bu \bu}$ is block diagonal, by the continuous mapping theorem, it remains to prove that $\bK_{\bu \bbf}\bLambda  \bK_{\bbf \bu}$ approaches block diagonal.

Firstly we assume that Hermitian kernels are not resolved, thus $\bK_{\bbf\bu} = \bK_{\bu\bbf}^H$.
Because $\lambda_n$ only depends on $(\bx_n, y_n)$, for the $(\bz_t, \bz_{t'})$ off-diagonal element in $\bK_{\bu \bbf}\bLambda \bK_{\bbf \bu} $, 
\begin{align}
    &\frac{1}{N}\sum_{n=1}^N \lambda_n k_t(\bz_t, \bx_n) k_{t'}^H(\bz_{t'}, \bx_n) \to \expect_{p(\bx)p(y|\bx)}[ \lambda(\bx, y) k_t(\bz_t, \bx)k^H_{t'}(\bz_{t'}, \bx) ] 
    = \expect_{p(\bx)}[ \expect_{p(y|\bx)}[\lambda(\bx, y)] k_t(\bz_t, \bx)k^H_{t'}(\bz_{t'}, \bx) ],  \notag 
\end{align}
We let $\hat{\lambda}(\bx) := \expect_{p(y|\bx)}[\lambda(\bx, y)]$, then the formula can be further computed as,
\begin{align}
    & \expect_{p(\bx)}[ \hat{\lambda}(\bx) k_t(\bz_t, \bx)k^H_{t'}(\bz_{t'}, \bx) ] \notag \\
    &=\expect_{p(\bx)}[ \hat{\lambda}(\bx) \sum_{s=0}^{T-1} \sum_{s'=0}^{T-1} \bF_{t,s} \bF^H_{t',s'} k(\bx, G^s (\bz_t)) k^H(\bx, G^{s'} (\bz_{t'}))  ] \notag \\
    &= \expect_{p(\bx)}[ \hat{\lambda}(\bx) \sum_{s=0}^{T-1} \sum_{s'=0}^{T-1} \bF_{t,s} \bF^H_{t',s+s'} k(\bx, G^s(\bz_t)) k^H(\bx, G^{s+s'}(\bz_{t'}))  ] \notag \\
    &= \expect_{p(\bx)}[\hat{\lambda}(G^s(\bx))  \sum_{s=0}^{T-1} \sum_{s'=0}^{T-1}\bF_{t,s} \bF^H_{t',s+s'}k(G^{s} (\bx), G^{s}(\bz_t)) k^H(G^{s} (\bx), G^{s+s'} (\bz_{t'}))  ] \notag \\
    &=\expect_{p(\bx)}[\hat{\lambda}(G^s(\bx)) k( \bx, \bz_t) \sum_{s'=0}^{T-1} k^H(\bx, G^{s'} (\bz_{t'}))  \sum_{s=0}^{T-1} \bF_{t,s} \bF^H_{t',s+s'} ] = 0, \notag
\end{align}
In the second equality we used the periodicity of $G$; In the third equality we used the assumption that $G^s(\bx)$ has the same distribution as $\bx$; In the last equality we used the property that $ \sum_{s=0}^{T-1} \bF_{t,s} \bF^H_{t',s+s'}  = 0$ for all $t \neq t'$.

Furthermore, if the Hermitian kernels are resolved in HVGP, let $T$ be the period, then $\bK_{\bu \bbf}\bLambda \bK_{\bbf \bu}$ is a matrix of $(1+\lfloor T/2 \rfloor) \times (1+\lfloor T/2 \rfloor)$. For any off-diagonal element at $(t, t')$, $[\bK_{\bu \bbf}\bLambda \bK_{\bbf \bu}]_{t,t'}$ equals to,
\begin{align}
    \frac{1}{N}\sum_{n=1}^N \lambda_n \left(k_t(\bz_t, \bx_n) + k_{T-t}(\bz_t, \bx_n)\right)\left(k^H_{t'}(\bz_{t'}, \bx_n) + k^H_{T-t'}(\bz_{t'}, \bx_n)\right) \notag 
\end{align}
Given previous results, because $t, T-t$ are both different with $t', T-t'$, the formula becomes $0$ as well, as $N \to \infty$.
\end{proof}


\end{document}